\newtheorem{theorem}{Theorem}[section]
\newtheorem{corollary}{Corollary}[theorem]
\newtheorem{definition}{Definition}[section]
\title{Precision, Stability, and Generalization: A Comprehensive Assessment of RNNs learnability capability for Classifying Counter and Dyck Languages}
\author{%
  Neisarg Dave \\
  The Pennsylvania State University\\
  \texttt{nud83@psu.edu}
  \And
  Daniel Kifer \\
  The Pennsylvania State University\\
  \texttt{duk17@psu.edu}
  \AND
  C. Lee Giles \\
  The Pennsylvania State University \\
  \texttt{clg20@psu.edu}  
  \And
  Ankur Mali \\
  University of South Florida\\
  \texttt{ankurarjunmali@usf.edu}
}
\begin{document}

\maketitle

\begin{abstract}
This study investigates the learnability of Recurrent Neural Networks (RNNs) in classifying structured formal languages, focusing on counter and Dyck languages. Traditionally, both first-order (LSTM) and second-order (O2RNN) RNNs have been considered effective for such tasks, primarily based on their theoretical expressiveness within the Chomsky hierarchy. However, our research challenges this notion by demonstrating that RNNs primarily operate as state machines, where their linguistic capabilities are heavily influenced by the precision of their embeddings and the strategies used for sampling negative examples.
Our experiments revealed that performance declines significantly as the structural similarity between positive and negative examples increases. Remarkably, even a basic single-layer classifier using RNN embeddings performed better than chance. To evaluate generalization, we trained models on strings up to a length of 40 and tested them on strings from lengths 41 to 500, using 10 unique seeds to ensure statistical robustness. Stability comparisons between LSTM and O2RNN models showed that O2RNNs generally offer greater stability across various scenarios. We further explore the impact of different initialization strategies revealing that our hypothesis is consistent with various RNNs.  Overall, this research questions established beliefs about RNNs' computational capabilities, highlighting the importance of data structure and sampling techniques in assessing neural networks' potential for language classification tasks. It emphasizes that stronger constraints on expressivity are crucial for understanding true learnability, as mere expressivity does not capture the essence of learning.
\end{abstract}

%

\section{Introduction}
Recurrent neural networks (RNNs) are experiencing a resurgence, spurring significant research aimed at establishing theoretical bounds on their expressivity. As natural neural analogs to state machines described by the Chomsky hierarchy, RNNs offer a robust framework for examining learnability, stability, and generalization—core aspects for advancing the development of memory-augmented models.

While conventional RNN architectures typically approximate finite state automata, LSTMs have shown the capacity to learn and generalize non-regular grammars, including counter languages and Dyck languages. These non-regular grammars demand a state machine enhanced with a memory component, and LSTM cell states have been demonstrated to possess sufficient expressivity to mimic dynamic counters. However, understanding whether these dynamics are stable and reliably learnable is crucial, particularly as the stability of learned fixed points directly impacts the generalization and reliability of these networks.

In this work, we extend the investigation into the expressiveness of RNNs by focusing on the empirical evidence for learnability and generalization in complex languages, with specific attention to Dyck and counter languages. Our analysis reveals the following key insights:
\begin{enumerate}
    \item The counter dynamics learned by LSTM cell states are prone to collapse, leading to unstable behavior.
    \item When positive and negative samples are not topologically proximal, the classifier can obscure the collapse of counter dynamics, creating an illusion of generalization.
    \item The initial weight distribution has little effect on the eventual collapse of counter dynamics, suggesting inherent instability.
    \item Second-order RNNs, although stable in approximating fixed states, lack the mechanism required for counting, underscoring the limitations of their expressivity.
\end{enumerate}

Our analysis builds upon results from \cite{omlin_sigmoid} regarding the behavior of the \textit{sigmoid} activation function, extending this understanding to the fixed points of the \textit{tanh} function used in LSTM cell and hidden state updates. Drawing from parallels noted by \cite{merrill-etal-2020-formal} between LSTMs and counter machines, we show that while LSTM cell states exhibit the expressivity needed for counting, this capability is not reliably captured in the hidden state. As a result, when the difference between successive hidden states falls below the precision threshold of the decoder, the classifier can no longer accurately represent the counter, leading to generalization failure. Additionally, we explore how input and forget gates within the LSTM clear the counter dynamics as state changes accumulate, resulting in an eventual collapse of dynamic behavior.

Further, we extend our exploration to analyze the learnability of classification layers when the encoding RNN is initialized randomly and not trained. This setup allows us to assess the extent of instability induced by the collapse of counter dynamics in the LSTM cell state and the role of numeric precision in the hidden state that supports the classification layer’s performance.

It is crucial to recognize that most prior studies demonstrating the learnability of RNNs on counter languages such as \( a^n b^n \), \( a^n b^n c^n \), and \( a^n b^n c^n d^n \) have overlooked the significance of topological distance between positive and negative samples. Such sampling considerations are vital for a thorough understanding of RNN trainability. To address this gap, we incorporate three sampling strategies with varying levels of topological proximity between positive and negative samples, thereby challenging the RNNs to genuinely learn the counting mechanism.

By focusing on stability and fixed-point dynamics, our work offers a plausible lens through which the learnability of complex grammars in recurrent architectures can be better understood. We argue that stability, as characterized by the persistence of fixed points, is a critical factor in determining whether these models can generalize and reliably encode non-regular languages, shedding light on the inherent limitations and potentials of RNNs in such tasks.

\section{Related Work}

The relationship between Recurrent Neural Networks (RNNs), automata theory, and formal methods has been a focal point in understanding the computational power and limitations of neural architectures. Early studies have shown that RNNs can approximate the behavior of various automata and formal language classes, providing insights into their expressivity and learnability. \cite{giles_rule_refinement} was one of the first to demonstrate that RNNs are capable of learning finite automata. They extracted finite state machines from trained RNNs, showing that the learned rules could be represented as deterministic automata. This foundational work laid the groundwork for subsequent studies on how RNNs encode and process state-based structures. Expanding on this, \cite{OMLIN_o2} showed that second-order recurrent networks, which include multiplicative interactions between inputs and hidden states, are superior state approximators compared to standard first-order RNNs. This enhancement in state representation significantly improved the networks' ability to learn and represent complex sequences.

The analysis of RNNs' functional capacity continued with \cite{omlin_sigmoid, mali2023computational}, who investigated the discriminant functions underlying first-order and second-order RNNs. Their results provided a deeper understanding of how these architectures utilize hidden state dynamics to implement decision boundaries and process temporal patterns. Meanwhile, the theoretical limits of RNNs were formalized by \cite{hava_rnn}, who proved that RNNs are Turing Complete when equipped with infinite precision. This result implies that RNNs, in principle, can simulate any computable function, positioning them as universal function approximators.

Building on these foundational insights, recent research has aimed to identify the practical scenarios under which RNNs can achieve such theoretical expressiveness. \cite{mali2023computational} extended Turing completeness results to a second-order RNN, demonstrating that it can achieve Turing completeness in bounded time, making it relevant for real-world applications where resources are constrained. This shift towards practical expressivity has opened new avenues for applying RNNs to complex language tasks. Moving towards specific language modeling tasks, \cite{uber} explored how Long Short-Term Memory (LSTM) networks can learn context-free and context-sensitive grammars, such as $a^nb^n$ and $a^nb^nc^n$. Their results showed that LSTMs could successfully learn these patterns, albeit with limitations in scaling to larger sequence lengths. Extending these findings, \cite{merrill-etal-2020-formal} established a formal hierarchy categorizing RNN variants based on their expressivity, placing LSTMs in a higher class due to their ability to simulate counter machines. This formal classification aligns with the observation that LSTMs can implement complex counting and state-tracking mechanisms, making them suitable for tasks involving nested dependencies. Further investigations into RNNs' capacity to recognize complex languages have revealed both strengths and limitations. \cite{suzgun-etal-2019-lstm} analyzed the ability of LSTMs to learn the Dyck-1 language, which models balanced parentheses, and found that while a single LSTM neuron could learn Dyck-1, it failed to generalize to Dyck-2, a more complex language with nested dependencies. Their follow-up work \cite{suzgun-etal-2019-evaluating} studied generalization on $a^nb^n$, $a^nb^nc^n$, and $a^nb^nc^nd^n$ grammars, showing that performance varied significantly with sequence length sampling strategies. These findings highlight that while RNNs can theoretically represent these languages, practical training limitations impede their learnability.

Beyond traditional RNNs, the role of specific activation functions in enhancing expressivity has also been studied. \cite{weiss2018practical} showed that RNNs with ReLU activations are strictly more powerful than those using standard sigmoid or tanh activations when it comes to counting tasks. This observation suggests that architectural modifications can significantly alter the network’s functional capacity. In a similar vein, \cite{stogin2024provably} proposed neural network pushdown automata and neural network Turing machines, establishing a theoretical framework for integrating stacks into neural architectures, thereby enabling them to simulate complex computational models like pushdown automata and Turing machines. On the stability and generalization front, \cite{dave2024stability} compared the stability of states learned by first-order and second-order RNNs when trained on Tomita and Dyck grammars. Their results indicate that second-order RNNs are better suited for maintaining stable state representations across different grammatical tasks, which is critical for ensuring that the learned model captures the true structure of the language. Their work also explored methods for extracting deterministic finite automata (DFA) from trained networks, evaluating the effectiveness of extraction techniques like those proposed by \cite{weiss2018extracting} and \cite{wang2018verification}. This line of research is pivotal in understanding how well trained RNNs can be interpreted and how their internal state representations correspond to formal structures.

In terms of language generation and hierarchical structure learning, \cite{hewitt-etal-2020-rnns} demonstrated that LSTMs, when trained as language generators, can learn Dyck-$(k, m)$ languages, which involve hierarchical and nested dependencies, drawing parallels between these formal languages and syntactic structures in natural languages. Finally, several studies have shown that the choice of objective functions and learning algorithms significantly affects RNNs' ability to stably learn complex grammars. For instance, \cite{lan2022minimum} and \cite{mali2021investigating} demonstrated that specialized loss functions, such as minimum description length, lead to more stable convergence and better generalization on formal language tasks. In light of the diverse findings from the aforementioned studies, our work systematically analyzes the divergence between the theoretical expressivity of RNNs and their empirical generalization capabilities through the lens of fixed-point theory. Specifically, we investigate how different RNN architectures capture and maintain stable state representations when learning complex grammars, focusing on the role of numerical precision, learning dynamics, and model stability. By leveraging theoretical results on fixed points and state stability, we provide a unified framework to evaluate the strengths and limitations of various RNN architectures.

The remainder of the paper is organized as follows: In Section 3, we present theoretical results on the fixed points of discriminant functions, providing foundational insights into the stability properties of RNNs. Section 4 extends these results by introducing a formal framework for analyzing RNNs as counter machines, highlighting how cell update mechanisms contribute to language recognition. Section 5 focuses on the impact of numerical precision on learning dynamics and convergence. Section 6 describes the experimental setup and empirical evaluation on complex formal languages, including Dyck and Tomita grammars. Finally, Section 7 discusses the broader implications of our findings and suggests future research directions.

\begin{figure*}
\centering
\begin{subfigure}{0.24\textwidth}
  \centering
  \includegraphics[width=0.97\linewidth]{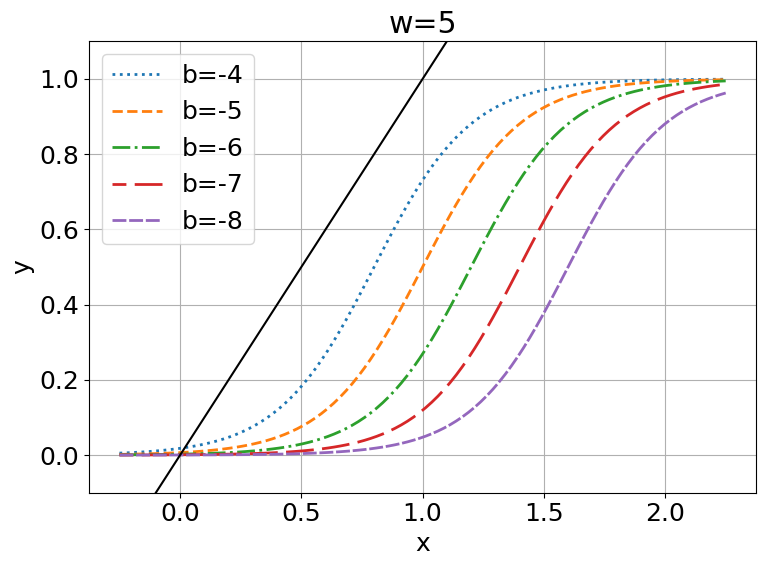}
  \caption{}
  \label{fig:sigmoid_w5}
\end{subfigure}%
\begin{subfigure}{0.24\textwidth}
  \centering
  \includegraphics[width=0.97\linewidth]{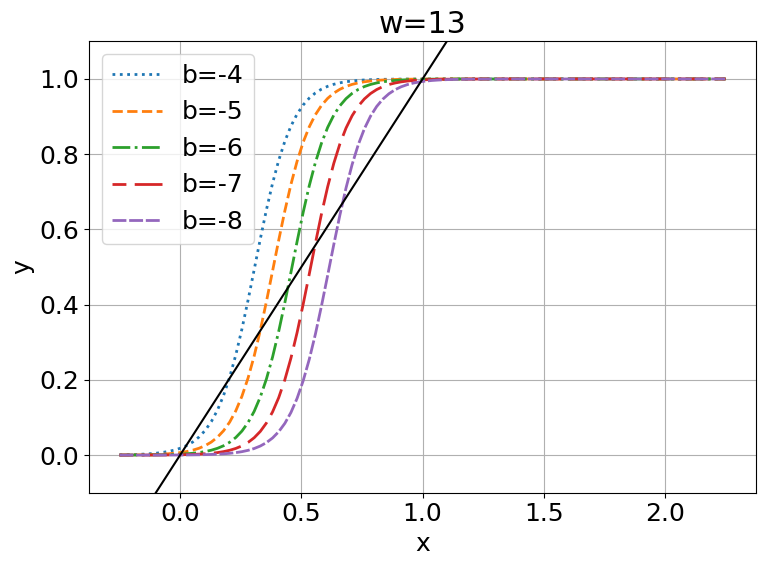}
  \caption{}
  \label{fig:sigmoid_w13}
\end{subfigure}%
\begin{subfigure}{0.24\textwidth}
  \centering
  \includegraphics[width=.97\linewidth]{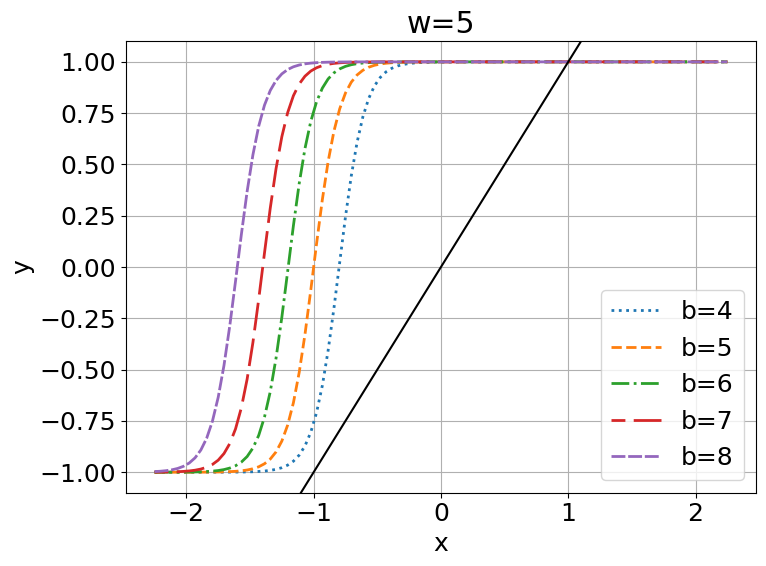}
  \caption{}
  \label{fig:tanh_w5}
\end{subfigure}
\begin{subfigure}{0.24\textwidth}
  \centering
  \includegraphics[width=.97\linewidth]{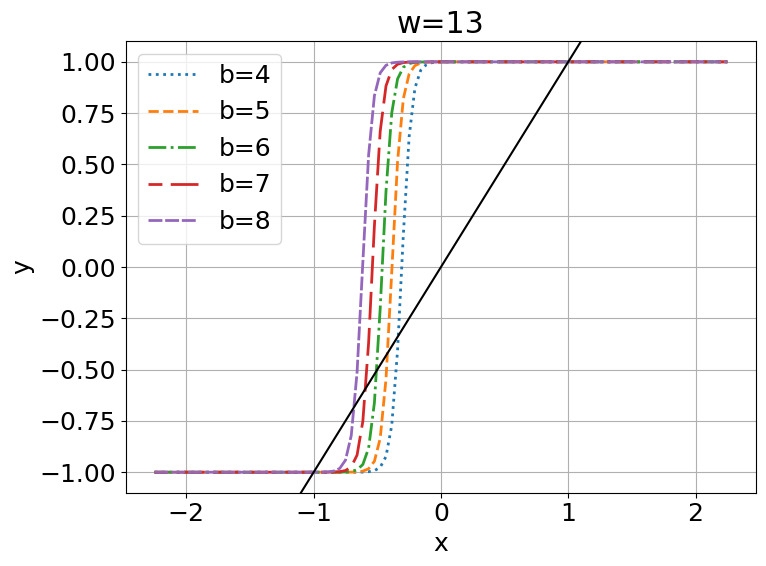}
  \caption{}
  \label{fig:tanh_w13}
\end{subfigure}

\caption{The fixed points of discriminant function $f(wx + b)$ are the intersection points with the line $g(x) = x$ (solid black curve). The given figures show the existence of fixed points for $b$ in range $[-8, -4]$ and $w = 13$ for \textit{sigmoid} ( a and b) and \textit{tanh} ( c and d ). We can observe here that in the given range as the $w$ increased from $5$ to $13$, the number of fixed points increased from $1$ to $3$.}

\label{fig:fp_sigmoid_tanh}
\end{figure*}

\section{Fixed Points of Discriminant Functions}
\label{sec:fp}

In this section we focus on two prominent discriminant functions: \textit{sigmoid} and \textit{tanh}, both of which are extensively utilized in widely-adopted RNN cells such as LSTM and O2RNN.
\begin{theorem}
    BROUWER’S FIXED POINT THEOREM \cite{boothby-fpt}: For any continuous mapping $f:Z \rightarrow Z$, where Z is a compact, non-empty convex set, $\exists \ z_f$ s.t. $f(z_f) \rightarrow z_f$
    \label{th:fpt}
\end{theorem}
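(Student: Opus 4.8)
The plan is to reduce the statement to its most classical instance—the closed Euclidean ball—and then establish the ball case via the no-retraction theorem. Because every RNN state space of interest here is a compact convex subset $Z \subset \mathbb{R}^n$, I would first invoke the standard fact that any non-empty compact convex set is homeomorphic to a closed ball $\overline{B}^k \subset \mathbb{R}^k$, where $k$ is the dimension of the affine hull of $Z$ (when $Z$ has empty interior one simply passes to that affine hull before applying the argument). The fixed-point property is a topological invariant: if $h : Z \to \overline{B}^k$ is such a homeomorphism and $f : Z \to Z$ is continuous, then $\tilde f := h \circ f \circ h^{-1} : \overline{B}^k \to \overline{B}^k$ is continuous, and any $\tilde z$ with $\tilde f(\tilde z) = \tilde z$ yields $z_f := h^{-1}(\tilde z)$ satisfying $f(z_f) = z_f$. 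Hence it suffices to prove the theorem for $Z = \overline{B}^n$.

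Second, I would prove the theorem on the ball by contradiction, using the \emph{no-retraction theorem}: there is no continuous map $r : \overline{B}^n \to \partial \overline{B}^n = S^{n-1}$ that restricts to the identity on the boundary sphere. Assuming $f : \overline{B}^n \to \overline{B}^n$ has \emph{no} fixed point, we have $f(x) \neq x$ for every $x$, so the ray starting at $f(x)$ and passing through $x$ is well defined; let $r(x)$ be the point where this ray meets $S^{n-1}$ on the far side of $x$. One checks that $r$ is continuous—the intersection point is $r(x) = x + t(x)\bigl(x - f(x)\bigr)$ with $t(x) \ge 0$ the sign-determined root of the quadratic $|r(x)|^2 = 1$, which depends continuously on $x$ and $f(x)$—and that $r(x) = x$ whenever $x \in S^{n-1}$ (there the nonnegative root is $t=0$). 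Thus $r$ would be a retraction of the ball onto its boundary, contradicting the no-retraction theorem; therefore a fixed point $z_f$ must exist.

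Third—and this is the genuine mathematical content—I would prove the no-retraction theorem itself. The cleanest self-contained route, well suited to the discrete and approximation-theoretic flavor of this paper, is combinatorial: establish \emph{Sperner's Lemma} (every Sperner-admissible labeling of a triangulated simplex contains an odd number of fully labeled cells, hence at least one), then use progressively finer triangulations together with compactness to extract a limit point whose image under a hypothetical retraction is forced to remain in the interior, a contradiction. Equivalently, one may argue via algebraic topology: a retraction $r$ would make the inclusion-induced identity on $H_{n-1}(S^{n-1}) \cong \mathbb{Z}$ factor through $H_{n-1}(\overline{B}^n) = 0$, which is impossible. I expect this step to be the main obstacle, since the reduction and the ray construction are routine, whereas ruling out a retraction is precisely where the essential topology (or the nontrivial Sperner combinatorics) resides.
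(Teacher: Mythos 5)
Your proposal is a correct outline of the standard proof of Brouwer's theorem, but there is nothing in the paper to compare it against: the paper never proves this statement. It is quoted as a classical result with a textbook citation, and the paper's own work only begins with the scalar corollaries that follow it (fixed points of $\sigma(wx+b)$ and $\tanh(wx+b)$), which are proved by the Intermediate Value Theorem alone. Your three-step route is the textbook argument and is sound: (i) reducing a nonempty compact convex $Z$ to a closed ball $\overline{B}^k$ over its affine hull, using that the fixed-point property is a topological invariant; (ii) deriving Brouwer on the ball from the no-retraction theorem via the ray map $r(x) = x + t(x)\bigl(x - f(x)\bigr)$, whose continuity and boundary behavior you verify correctly (convexity guarantees the exit parameter satisfies $t \ge 0$); and (iii) proving no-retraction either combinatorially or by the homology factorization of $\mathrm{id}$ on $H_{n-1}(S^{n-1}) \cong \mathbb{Z}$ through $H_{n-1}(\overline{B}^n) = 0$. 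Two minor caveats: the Sperner paragraph is the vaguest part — Sperner's lemma is ordinarily used to prove Brouwer directly via approximate fixed points of fully labeled cells, rather than to prove no-retraction, so that route as sketched would need straightening — and for $n=1$ the homology argument requires reduced homology (or simply connectedness). Finally, it is worth observing what each approach buys: your machinery establishes the theorem in full generality, whereas everything the paper actually uses is one-dimensional, where the IVT argument already present in the paper's corollaries suffices; conversely, the paper's citation-only treatment leaves Theorem~\ref{th:fpt} unproved, and also contains a typo your proposal implicitly corrects ($f(z_f) \rightarrow z_f$ should read $f(z_f) = z_f$).
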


\begin{corollary}
    Let $f:\mathbb{R} \rightarrow \mathbb{R}$ be a continuous, monotonic function with a non-empty, bounded, and convex co-domain $\mathbb{D} \subset \mathbb{R}$. Then $f$ has at least one fixed point, i.e., there exists some $c \in \mathbb{R}$ such that $f(c) = c$.
\end{corollary}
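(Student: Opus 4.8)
The plan is to reduce the statement to a direct application of Brouwer's Fixed Point Theorem (Theorem~\ref{th:fpt}) by exhibiting a compact, non-empty, convex set that $f$ maps into itself. The only real obstacle is that $f$ is defined on all of $\mathbb{R}$, which is neither compact nor bounded, so the theorem cannot be invoked for $f:\mathbb{R}\to\mathbb{R}$ directly; the work lies entirely in locating a suitable invariant subinterval, after which the conclusion is immediate.

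First I would extract that subinterval from the hypotheses on the co-domain $\mathbb{D}$. Since $\mathbb{D}$ is non-empty and bounded, the quantities $a=\inf\mathbb{D}$ and $b=\sup\mathbb{D}$ are finite reals with $a\le b$; since $\mathbb{D}$ is convex it is an interval, and its closure is exactly $K=[a,b]$. By construction $K$ is compact (closed and bounded in $\mathbb{R}$), non-empty (as $a\le b$), and convex, so it satisfies the hypotheses required by Theorem~\ref{th:fpt}. Next I would check that $f$ restricts to a self-map of $K$: because the range of $f$ lies in $\mathbb{D}$, we have $f(x)\in\mathbb{D}\subseteq[a,b]=K$ for every $x\in\mathbb{R}$, and in particular for every $x\in K$. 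Hence $f|_K:K\to K$ is a well-defined continuous map, and Brouwer's theorem produces a point $c\in K$ with $f(c)=c$, the desired fixed point.

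Two minor points merit attention in the write-up. If $\mathbb{D}$ is open, say $(a,b)$, there is no boundary issue: the fixed point $c$ automatically satisfies $c=f(c)\in\mathbb{D}$, so it lands in the interior rather than on $\partial K$. Second, I would note that monotonicity is \emph{not} actually needed for the existence claim — continuity together with a bounded, convex co-domain already forces a fixed point — and I would deliberately \emph{refrain} from claiming that monotonicity yields uniqueness, since the paper's own Figure~\ref{fig:fp_sigmoid_tanh} shows monotone (increasing) \textit{sigmoid} and \textit{tanh} maps possessing up to three fixed points. The monotonicity hypothesis is best read as matching the activation functions of interest rather than as sharpening the existence conclusion.
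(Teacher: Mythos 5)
Your proposal is correct, but it takes a genuinely different route from the paper's own proof. The paper never actually invokes Brouwer's theorem: it argues directly via the Intermediate Value Theorem, writing $\mathbb{D}=[a,b]$, observing that $f(x)\geq a > x$ for $x<a$ and $f(x)\leq b < x$ for $x>b$, and concluding that $f(x)-x$ changes sign and so vanishes somewhere in $[a,b]$. You instead restrict $f$ to the compact set $K=[a,b]$ (the closure of $\mathbb{D}$), verify the self-map condition $f(K)\subseteq \mathbb{D}\subseteq K$, and apply Theorem~\ref{th:fpt} as a black box. Your approach has three advantages: it matches the statement's positioning as a \emph{corollary} of Brouwer's theorem, whereas the paper's argument is logically independent of it; it handles non-closed co-domains cleanly by passing to the closure, where the paper silently assumes $\mathbb{D}$ is the closed interval $[a,b]$ (note that the co-domains relevant here, e.g.\ the sigmoid's range $(0,1)$, are open); and it correctly identifies that monotonicity is superfluous --- the paper's step ``$f(x)\geq a$ (since $f$ is monotonic)'' in fact needs only that the range of $f$ lies in $[a,b]$, not monotonicity. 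What the paper's route buys in exchange is elementarity: its IVT argument is essentially the self-contained one-dimensional proof of the fixed-point property, requiring no appeal to the general theorem. Both proofs are sound, and your closing caution not to infer uniqueness from monotonicity is well placed, given that Theorem~\ref{th:fp_tanh} establishes exactly the multi-fixed-point regime.
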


\begin{proof}
    Since $\mathbb{D} \subset \mathbb{R}$ is a non-empty, bounded, and convex set, let the co-domain of $f$ be denoted as $\mathbb{D} = [a, b]$ for some $a, b \in \mathbb{R}$ with $a < b$. Consider the identity function $g(x) = x$, which is continuous on $\mathbb{R}$. The fixed points of $f$ correspond to the points of intersection between $f(x)$ and $g(x)$, i.e., the solutions to the equation $f(x) = g(x)$. 

    Next, observe the behavior of $f$ outside its co-domain $\mathbb{D} = [a, b]$:
    \begin{itemize}
        \item For any $x < a$, we have $f(x) \geq a > x$ (since $f$ is monotonic), implying that $f(x) > x$.
        \item For any $x > b$, we have $f(x) \leq b < x$, implying that $f(x) < x$.
    \end{itemize}

    By the Intermediate Value Theorem, if $f(x) > x$ for some $x < a$ and $f(x) < x$ for some $x > b$, then there must exist a point $c \in [a, b]$ such that $f(c) = c$.

    Thus, the function $f$ has at least one fixed point in the interval $[a, b]$.

\end{proof}




\begin{corollary}
    A parameterized sigmoid function of the form $\sigma(x) = \frac{1}{1 + e^{-(wx+b)}}$, where $w, b \in \mathbb{R}$, has at least one fixed point, i.e., there exists some $c \in \mathbb{R}$ such that $\sigma(c) = c$.
\end{corollary}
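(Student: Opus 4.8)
The plan is to verify that the parameterized sigmoid meets every hypothesis of the preceding corollary and then invoke it directly. First I would record continuity: $\sigma$ is the composition of the affine map $x \mapsto wx+b$, the exponential, and the map $t \mapsto 1/(1+t)$, each continuous on the relevant domain, so $\sigma$ is continuous on all of $\mathbb{R}$. Next I would establish monotonicity through the derivative $\sigma'(x) = w\,\sigma(x)\bigl(1-\sigma(x)\bigr)$; since $\sigma(x)(1-\sigma(x)) > 0$ for every $x$, the sign of $\sigma'$ is exactly the sign of $w$, so $\sigma$ is strictly increasing when $w>0$, strictly decreasing when $w<0$, and constant when $w=0$ — monotonic in every case. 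Finally, the image of $\sigma$ always lies in the open interval $(0,1)$, which is bounded, and being an interval it is convex; hence the co-domain hypothesis is met. Applying the corollary then yields a point $c$ with $\sigma(c)=c$.

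The one place demanding care is the degenerate case $w=0$, where $\sigma$ collapses to the constant $c_0 = 1/(1+e^{-b})$. The earlier corollary's proof implicitly used a non-degenerate interval $[a,b]$ with $a<b$, so I would dispatch this case by hand: a constant function with value $c_0 \in (0,1)$ satisfies $\sigma(c_0)=c_0$ immediately, giving the fixed point $c=c_0$.

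A cleaner, self-contained alternative that avoids the monotonicity discussion altogether is to apply the Intermediate Value Theorem directly to $h(x) = \sigma(x) - x$. Because $\sigma$ maps into $(0,1)$ regardless of $w,b$, I would evaluate $h(0) = \sigma(0) > 0$ and $h(1) = \sigma(1) - 1 < 0$; continuity of $h$ then forces a root $c \in (0,1)$ with $\sigma(c)=c$. This argument is robust to the sign of $w$ and handles the $w=0$ case without separate treatment.

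I expect no genuine obstacle here: the result is essentially an instantiation of the general fixed-point corollary, and the only subtlety is the boundedness/openness mismatch between the sigmoid's true image $(0,1)$ and the closed interval $[a,b]$ assumed in that corollary, which the direct IVT formulation resolves cleanly.
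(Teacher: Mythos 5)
Your proposal is correct, and it is more careful than the paper's own argument. The paper proves this corollary from scratch with the Intermediate Value Theorem applied to $\sigma(x)-x$, computing the limits as $x \to \pm\infty$: it notes $\sigma(x)\to 0$ as $x\to-\infty$ and $\sigma(x)\to 1$ as $x\to+\infty$, then concludes a sign change. That computation implicitly assumes $w>0$ (for $w<0$ the limits swap, and for $w=0$ the function is constant, cases the paper never addresses), and it even contains a sign slip --- it asserts $\sigma(x)-x\to-\infty$ as $x\to-\infty$ when the correct limit is $+\infty$, although the stated conclusion ``changes sign from positive to negative'' matches the correct computation. Your first route is structurally different: rather than redoing IVT, you verify the hypotheses of the preceding general corollary (continuity, monotonicity via $\sigma'(x)=w\,\sigma(x)(1-\sigma(x))$, bounded convex image) and invoke it, which is the natural use of the paper's own scaffolding and forces you to confront the sign of $w$ explicitly, including the degenerate constant case $w=0$ that you correctly dispatch by hand. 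Your second route --- IVT on $h(x)=\sigma(x)-x$ evaluated at the finite points $0$ and $1$, using only $\sigma(\mathbb{R})\subset(0,1)$ --- is the cleanest of the three: it needs no asymptotics, no monotonicity, and no case split on $w$, and it repairs both defects of the paper's proof (the hidden $w>0$ assumption and the limit error) while localizing the fixed point to $(0,1)$. The only thing the paper's asymptotic version buys in exchange is that it mirrors the style of the later $\tanh$ arguments; mathematically your finite-point formulation strictly dominates it.
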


\begin{proof}
    Consider the function $\sigma(x) = \frac{1}{1 + e^{-(wx+b)}}$. We want to show that $\sigma(x)$ has at least one fixed point. A fixed point is a value $c \in \mathbb{R}$ such that $\sigma(c) = c$.

    First, observe that the sigmoid function $\sigma(x)$ is continuous and strictly increasing for all $x \in \mathbb{R}$. The co-domain of $\sigma(x)$ is the interval $[0, 1]$, i.e., $\sigma(x) \in [0, 1]$ for all $x \in \mathbb{R}$. We now consider the continuous identity function $g(x) = x$, which intersects the line $y = x$.

    Next, let us analyze the behavior of $\sigma(x) - x$ as $x \rightarrow -\infty$ and $x \rightarrow +\infty$:
    \begin{itemize}
        \item As $x \rightarrow -\infty$, we have $e^{-(wx+b)} \rightarrow \infty$, which implies $\sigma(x) \rightarrow 0$. Therefore, $\sigma(x) - x \rightarrow -\infty$ as $x \rightarrow -\infty$.
        \item As $x \rightarrow +\infty$, we have $e^{-(wx+b)} \rightarrow 0$, which implies $\sigma(x) \rightarrow 1$. Therefore, $\sigma(x) - x \rightarrow 1 - x \rightarrow -\infty$ as $x \rightarrow +\infty$.
    \end{itemize}

    Since $\sigma(x) - x$ is a continuous function on $\mathbb{R}$ and changes sign (from positive to negative) as $x$ varies from $-\infty$ to $+\infty$, by the Intermediate Value Theorem, there must exist some $c \in \mathbb{R}$ such that:

    \[
    \sigma(c) - c = 0 \quad \Rightarrow \quad \sigma(c) = c.
    \]

    Hence, $\sigma(x)$ has at least one fixed point.

\end{proof}


\begin{corollary}
    A parameterized $\tanh$ function of the form $\gamma(x) = \tanh(wx+b)$, where $w, b \in \mathbb{R}$, has at least one fixed point, i.e., there exists some $c \in \mathbb{R}$ such that $\gamma(c) = c$.
\end{corollary}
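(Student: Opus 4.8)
The plan is to mirror the structure of the sigmoid proof and apply the Intermediate Value Theorem to the auxiliary function $\gamma(x) - x$, where $\gamma(x) = \tanh(wx+b)$. The essential observation is that $\tanh$ is continuous on all of $\mathbb{R}$ with bounded co-domain $(-1, 1)$, so $\gamma(x) - x$ is a continuous real-valued function whose sign I can control for large $|x|$. Concretely, first I would record that $\gamma(x) = \tanh(wx+b) \in (-1,1)$ for every $x \in \mathbb{R}$, regardless of the sign of $w$, since $\tanh$ maps $\mathbb{R}$ into $(-1,1)$.

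Next I would examine the limiting behavior of $h(x) := \gamma(x) - x$ as $x \to \pm\infty$. Because $\gamma$ stays trapped in the bounded interval $(-1,1)$ while the identity term $x$ diverges, the $-x$ term dominates: as $x \to +\infty$ we have $h(x) \to -\infty$, and as $x \to -\infty$ we have $h(x) \to +\infty$. This holds uniformly in the value of $w$, which is convenient because it means I do not need to split into cases based on the sign or magnitude of $w$ — the boundedness of $\tanh$ alone forces the sign change. Since $h$ is continuous and changes sign, the Intermediate Value Theorem yields a point $c \in \mathbb{R}$ with $h(c) = 0$, i.e. $\gamma(c) = c$, which is exactly the claimed fixed point. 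Alternatively, since the co-domain $(-1,1)$ is a non-empty bounded convex set and $\tanh$ is monotonic, I could simply invoke the earlier Corollary on monotonic functions with bounded convex co-domain and be done in one line; I would likely present the self-contained IVT argument for clarity and note the corollary as a shortcut.

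I do not anticipate a genuine obstacle here, since the argument is structurally identical to the sigmoid case and the only substantive input is the boundedness of $\tanh$. The one point requiring mild care is the case $w < 0$: then $\gamma$ is strictly \emph{decreasing} rather than increasing, so the clean monotonicity-based endpoint comparison used for the sigmoid ($f(x) \ge a$ for $x < a$, etc.) does not transfer verbatim. This is precisely why I prefer to phrase the core argument through the asymptotics of $h(x) = \gamma(x) - x$ rather than through monotonicity: the sign change of $h$ at $\pm\infty$ depends only on $\lvert \tanh \rvert < 1$ and not on whether $\gamma$ increases or decreases, so a single continuous sign-change argument covers all real $w$ at once. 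The final step is a one-line appeal to the Intermediate Value Theorem to conclude existence of the fixed point.
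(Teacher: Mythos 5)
Your proposal is correct and follows essentially the same route as the paper: both apply the Intermediate Value Theorem to $\gamma(x) - x$, using the boundedness of $\tanh$ to force a sign change at $\pm\infty$. If anything, your version is slightly more careful than the paper's, which computes the limits of $\tanh(wx+b)$ under an implicit assumption that $w > 0$, whereas your observation that boundedness alone suffices covers $w \le 0$ without any case split.
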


\begin{proof}
    Consider the function $\gamma(x) = \tanh(wx+b)$. We want to show that $\gamma(x)$ has at least one fixed point. A fixed point is a value $c \in \mathbb{R}$ such that $\gamma(c) = c$.

    Step 1: Properties of the Function \(\gamma(x)\)
    The hyperbolic tangent function, $\tanh(x)$, is a continuous and strictly increasing function for all $x \in \mathbb{R}$. For any real value $y$, the function $\tanh(y)$ is bounded and satisfies $-1 < \tanh(y) < 1$. Thus, the co-domain of $\gamma(x) = \tanh(wx + b)$ is also bounded within $[-1, 1]$, i.e., $\gamma(x) \in [-1, 1]$ for all $x \in \mathbb{R}$.

    Furthermore, since $\tanh(x)$ is strictly increasing, the function $\gamma(x) = \tanh(wx+b)$ is also strictly increasing in $x$. This implies that $\gamma(x)$ is one-to-one and continuous over $\mathbb{R}$.

    Step 2: Analysis of $\gamma(x) - x$
    Consider the function:

    \[
    f(x) = \gamma(x) - x = \tanh(wx+b) - x.
    \]

    We want to show that \( f(x) = 0 \) has at least one solution, i.e., there exists some $c \in \mathbb{R}$ such that $\tanh(wc + b) = c$. To analyze the existence of such a $c$, let us examine the behavior of $f(x)$ as $x \rightarrow \pm \infty$:

    - As \( x \rightarrow -\infty \): We have $wx + b \rightarrow -\infty$. Thus, $\tanh(wx + b) \rightarrow -1$. Therefore:

    \[
    f(x) = \tanh(wx + b) - x \rightarrow -1 - x \rightarrow \infty.
    \]

    - As \( x \rightarrow +\infty \): We have $wx + b \rightarrow +\infty$. Thus, $\tanh(wx + b) \rightarrow 1$. Therefore:

    \[
    f(x) = \tanh(wx + b) - x \rightarrow 1 - x \rightarrow -\infty.
    \]

    Since $f(x)$ is continuous on $\mathbb{R}$ and changes sign from positive (as $x \rightarrow -\infty$) to negative (as $x \rightarrow +\infty$), by the Intermediate Value Theorem, there must exist some $c \in \mathbb{R}$ such that:

    \[
    f(c) = \tanh(wc + b) - c = 0.
    \]

    This implies that $\gamma(c) = c$, i.e., $\gamma(x)$ has at least one fixed point.

\end{proof}
Prior \cite{omlin_sigmoid} have shown that  parameterized sigmoid function $\sigma(x) = \frac{1}{1+e^{-(wx+b)}}$ has three fixed points for a given $b \in ]b^{-}, b^{+}[$ and $w > w_b$ for some $b^{-}, b^{+}, w_b \in \mathbb{R}$ and $b^{-} < b^{+}$. Further they showed sigmoid has two stable fixed point.
In this work we go beyond sigmoid and show that TanH also has three fixed points
 \begin{theorem}
	A parameterized $\tanh$ function $\gamma(x) = \tanh(wx+b)$ has three fixed points for a given $b \in ]b^{-}, b^{+}[$ and $w > w_b$ for some $b^{-}, b^{+}, w_b \in \mathbb{R}$ and $b^{-} < b^{+}$.
    \label{th:fp_tanh}
 \end{theorem}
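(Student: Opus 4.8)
The plan is to study $f(x) = \tanh(wx+b) - x$, whose zeros are exactly the fixed points of $\gamma$, and to count its sign changes through its monotonicity structure. Differentiating gives $f'(x) = w\,\operatorname{sech}^2(wx+b) - 1$, so $f'(x) = 0$ is equivalent to $\cosh(wx+b) = \sqrt{w}$. For $w \le 1$ this has no nondegenerate solution and $f' < 0$ throughout, giving a single fixed point; for $w > 1$ it has exactly two roots $x_1 < x_2$, namely $x_{1,2} = (-b \mp \cosh^{-1}(\sqrt w))/w$. Since $f' < 0$ for $x<x_1$, $f'>0$ on $(x_1,x_2)$, and $f'<0$ for $x>x_2$, the function $f$ strictly decreases, then increases, then decreases, while $f(x)\to+\infty$ as $x\to-\infty$ and $f(x)\to-\infty$ as $x\to+\infty$ exactly as in the proof of the preceding corollary. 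Consequently $f$ has exactly three zeros precisely when its local minimum and local maximum straddle zero, i.e. when $f(x_1) < 0 < f(x_2)$, and otherwise exactly one.

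Next I would anchor the argument at the symmetric case $b = 0$, where $f$ is odd, $f(-x) = -f(x)$. Here $x_1 = -x_2$ with $x_2 = \cosh^{-1}(\sqrt w)/w$, and oddness forces $f(x_1) = -f(x_2)$, so the two-sided condition collapses to the single inequality $f(x_2) > 0$. Using that $\cosh t = \sqrt w$ implies $\tanh t = \sqrt{(w-1)/w}$, this becomes $\sqrt{w(w-1)} > \cosh^{-1}(\sqrt w)$. Establishing this for every $w > 1$ is the crux: near $w = 1$ both sides vanish like $\sqrt{w-1}$, so a leading-order estimate is inconclusive and a second-order expansion (or a comparison of the two sides' derivatives in $w$) is needed to see that the left side dominates, whereas for large $w$ the left side grows linearly against the logarithmic growth $\cosh^{-1}(\sqrt w) = \ln(\sqrt w + \sqrt{w-1})$. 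Granting the inequality, $b = 0$ yields exactly three fixed points for all $w > 1$, pinning the threshold $w_b = 1$ at the center of the interval.

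Finally I would propagate this to a genuine interval of biases. The critical points $x_1, x_2$ and the extremal values $f(x_1), f(x_2)$ are smooth functions of $(w,b)$, and at $b = 0$ the inequalities $f(x_1) < 0 < f(x_2)$ are strict; hence they persist on an open neighborhood, producing $b^- < 0 < b^+$ on which three fixed points survive. Equivalently, and more transparently for exhibiting $w_b$, I can hold $b$ fixed and let $w \to \infty$: then $x_1, x_2 \to 0$ while $wx_1 + b \to -\infty$ and $wx_2 + b \to +\infty$, so $f(x_1) \to -1$ and $f(x_2) \to +1$, and the strict inequalities hold for all $w$ beyond some $w_b$ depending on $b$.

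The main obstacle is precisely the transcendental comparison $\sqrt{w(w-1)} > \cosh^{-1}(\sqrt w)$ at $b=0$: because both sides share the same square-root singularity as $w \downarrow 1$, proving strict dominance over the whole range $w > 1$ (not merely asymptotically) requires the careful expansion noted above rather than a one-line bound. The surrounding monotonicity and continuity steps are routine by comparison.
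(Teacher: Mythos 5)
Your proposal is correct, and it is in fact more complete than the paper's own argument. Both start identically: the same auxiliary function $f(x)=\tanh(wx+b)-x$, the same derivative $f'(x)=w\,\mathrm{sech}^2(wx+b)-1$, the same two critical points for $w>1$, and the same limits $f(x)\to\pm\infty$ as $x\to\mp\infty$. But at the decisive step the paper simply asserts that ``for $b\in\,]b^{-},b^{+}[$ and $w>w_b$, $f(x)$ changes sign three times,'' giving no mechanism for why. You supply exactly the missing mechanism: the reduction of ``exactly three zeros'' to the extremal sign condition $f(x_1)<0<f(x_2)$ via the decrease--increase--decrease shape, followed by an actual verification of that condition. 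Your fixed-$b$, $w\to\infty$ argument (where $wx_1+b\to-\infty$, $wx_2+b\to+\infty$, $x_{1,2}\to 0$, hence $f(x_1)\to-1$ and $f(x_2)\to+1$) is self-contained and already proves the theorem, with $w_b$ depending on $b$; to obtain a single $w_b$ valid on a whole interval $]b^{-},b^{+}[$, one only needs to note that the convergence is uniform for $b$ in any bounded interval, since $|x_{1,2}|\le(|b|+\cosh^{-1}(\sqrt{w}))/w$.

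Two remarks. First, what you identify as the main obstacle is not actually one: the comparison $\sqrt{w(w-1)}>\cosh^{-1}(\sqrt{w})$ for $w>1$ has a one-line proof. Substitute $\sqrt{w}=\cosh t$ with $t>0$; then $\sqrt{w(w-1)}=\cosh t\,\sinh t=\tfrac{1}{2}\sinh(2t)$ and $\cosh^{-1}(\sqrt{w})=t$, so the inequality reads $\sinh(2t)>2t$, which holds for every $t>0$. Thus your $b=0$ anchor closes immediately, yielding three fixed points for all $w>1$ at $b=0$ without any delicate expansion near $w=1$. Second, a small imprecision: when $f(x_1)=0$ or $f(x_2)=0$ the graph is tangent to the axis and $f$ has exactly two zeros, so ``otherwise exactly one'' should read ``otherwise one or two''; this degenerate case does not affect the theorem. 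Incidentally, your structure also quietly repairs a slip in the paper's proof, whose displayed second derivative $f''(x)=w^2(-2\,\mathrm{sech}^2(wx+b)\tanh(wx+b))-1$ carries a spurious $-1$; your argument never needs $f''$ at all.
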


\begin{proof}
We start by defining a fixed point of the function $\gamma(x) = \tanh(wx+b)$. A fixed point $x$ satisfies the equation:
\[
\gamma(x) = x \quad \Rightarrow \quad \tanh(wx + b) = x.
\]
Let us define a new function to analyze the fixed points:
\[
f(x) = \tanh(wx + b) - x.
\]
The fixed points of $\gamma(x)$ are the solutions to the equation $f(x) = 0$. We will analyze $f(x)$ in detail to determine the number of solutions.

\textbf{Step 1: Properties of $f(x)$}\\
The function $f(x) = \tanh(wx + b) - x$ is continuous and differentiable. We start by computing its derivative:
\[
f'(x) = \frac{d}{dx} \left( \tanh(wx + b) - x \right) = w \cdot \text{sech}^2(wx + b) - 1,
\]
where $\text{sech}(y) = \frac{2}{e^y + e^{-y}}$ is the hyperbolic secant function. The value of $\text{sech}^2(wx + b)$ satisfies $0 < \text{sech}^2(y) \leq 1$. Thus:
\[
f'(x) = w \cdot \text{sech}^2(wx + b) - 1.
\]

\textbf{Step 2: Critical Points of $f(x)$}\\
The critical points occur when $f'(x) = 0$:
\[
w \cdot \text{sech}^2(wx + b) - 1 = 0 \quad \Rightarrow \quad \text{sech}^2(wx + b) = \frac{1}{w}.
\]
Since $0 < \text{sech}^2(wx + b) \leq 1$, the above equation has a real solution if and only if:
\[
w > 1.
\]
For $w > 1$, there are exactly two critical points, $x_1$ and $x_2$, such that $x_1 < x_2$.

\textbf{Step 3: Behavior of $f(x)$ as $x \rightarrow \pm \infty$}\\
As $x \rightarrow \infty$, $wx + b \rightarrow \infty$ for $w > 0$. Thus, $\tanh(wx + b) \rightarrow 1$. Hence:
\[
f(x) = \tanh(wx + b) - x \rightarrow 1 - x \quad \text{as} \quad x \rightarrow \infty.
\]
Therefore, $f(x) \rightarrow -\infty$ as $x \rightarrow \infty$.

As $x \rightarrow -\infty$, $wx + b \rightarrow -\infty$ for $w > 0$. Thus, $\tanh(wx + b) \rightarrow -1$. Hence:
\[
f(x) = \tanh(wx + b) - x \rightarrow -1 - x \quad \text{as} \quad x \rightarrow -\infty.
\]
Therefore, $f(x) \rightarrow \infty$ as $x \rightarrow -\infty$.

\textbf{Step 4: Intermediate Value Theorem}\\
The intermediate value theorem tells us that since $f(x)$ is continuous and changes sign from $\infty$ to $-\infty$, it must have at least one root. Thus, there is at least one fixed point for $\gamma(x)$.

\textbf{Step 5: Conditions for Three Fixed Points}\\
We want to show that for specific values of $b$ and $w$, the function $f(x) = \tanh(wx + b) - x$ has exactly three roots. To do so, we analyze $f(x)$ in detail around its critical points.

1. \textbf{Critical Points Analysis}:

   Recall that the critical points of $f(x)$ are given by:
   \[
   w \cdot \text{sech}^2(wx + b) - 1 = 0 \quad \Rightarrow \quad \text{sech}^2(wx + b) = \frac{1}{w}.
   \]
   Let $y = wx + b$. Then the critical points $y_1$ and $y_2$ satisfy:
   \[
   \text{sech}^2(y_1) = \frac{1}{w}, \quad \text{sech}^2(y_2) = \frac{1}{w}.
   \]
   Solving for $y$, we get:
   \[
   y_1 = \pm \cosh^{-1}(\sqrt{w}), \quad y_2 = -y_1.
   \]
   Converting back to $x$:
   \[
   x_1 = \frac{y_1 - b}{w}, \quad x_2 = \frac{y_2 - b}{w}.
   \]

2. \textbf{Local Minima and Maxima Analysis}:

   At these critical points, the second derivative $f''(x)$ determines whether $f(x)$ has a local minimum or maximum:
   \[
   f''(x) = w^2 \cdot (-2\text{sech}^2(wx + b) \tanh(wx + b)) - 1.
   \]
   Analyzing $f''(x_1)$ and $f''(x_2)$, we can show that $x_1$ corresponds to a local minimum and $x_2$ corresponds to a local maximum (or vice-versa depending on $b$).

3. \textbf{Behavior of $f(x)$ in the Range $]b^{-}, b^{+}[$}:

   For $b \in ]b^{-}, b^{+}[$ and $w > w_b$, $f(x)$ changes sign three times, indicating three distinct zeros.

Thus, for $w > w_b$ and $b \in ]b^{-}, b^{+}[$, the function $\gamma(x) = \tanh(wx + b)$ has exactly three fixed points.

\end{proof}

This can be visualized in Figure \ref{fig:fp_sigmoid_tanh}(c, d). Let $b \in [-8, -4]$, then we can observe that $\gamma(x)$ meets $g(x) = x$ three times for $w =13$, while it only meets $g(x) = x$ once for $w = 5$. Since $\gamma^{-1}(x)$ is also a monotonic function, and $w$ and $x$ will have a monotonic inverse relationship, for all $w \geq 13 $, $\gamma(x)$ has $3$ fixed points. 

Next we show Tanh has out of three fixed point, two stable fixed points stable

\begin{theorem}
If a parameterized $\tanh$ function $\gamma(x) = \tanh(wx+b)$ has three fixed points $\xi^-, \xi^0, \xi^+$ such that $ -1 < \xi^{-} < \xi^0 < \xi^+ < 1$, then $\xi^-$ and $\xi^+$ are stable fixed points.
\end{theorem}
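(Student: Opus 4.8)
The plan is to reduce the statement to the standard linear-stability criterion for the discrete dynamical system $x_{n+1} = \gamma(x_n)$: a fixed point $\xi$ is stable precisely when $|\gamma'(\xi)| < 1$, since near $\xi$ one has $\gamma(x) - \xi \approx \gamma'(\xi)(x - \xi)$, so the iterates contract toward $\xi$ exactly when the multiplier has magnitude below one. Thus the whole proof amounts to showing $|\gamma'(\xi^-)| < 1$ and $|\gamma'(\xi^+)| < 1$.

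First I would compute the derivative and rewrite it at a fixed point. Since $\gamma'(x) = w\,\text{sech}^2(wx+b) = w\bigl(1 - \tanh^2(wx+b)\bigr) = w\bigl(1 - \gamma(x)^2\bigr)$, evaluating at any fixed point (where $\gamma(\xi) = \xi$) gives $\gamma'(\xi) = w(1 - \xi^2)$. Because $w > 0$ and each fixed point satisfies $-1 < \xi < 1$, we immediately get $\gamma'(\xi) > 0$, so it suffices to establish the upper bound $\gamma'(\xi^{\pm}) < 1$; the lower bound $\gamma'(\xi^{\pm}) > -1$ is automatic.

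Next I would reuse the geometry already set up in the proof of Theorem~\ref{th:fp_tanh} for the auxiliary function $f(x) = \gamma(x) - x$, whose roots are exactly the fixed points. That argument shows $f(x) \to +\infty$ as $x \to -\infty$, $f(x) \to -\infty$ as $x \to +\infty$, and that $f$ has exactly two critical points $x_1 < x_2$ solving $f'(x) = \gamma'(x) - 1 = 0$. Since $\gamma'(x) = w\,\text{sech}^2(wx+b)$ is unimodal, rising from $0$ to its peak $w > 1$ at $x = -b/w$ and falling back to $0$, we have $f'(x) < 0$ on $(-\infty, x_1)$ and on $(x_2, +\infty)$, while $f'(x) > 0$ on $(x_1, x_2)$. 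Hence $f$ decreases, then increases, then decreases, with a local minimum at $x_1$ and a local maximum at $x_2$; for $f$ to have three roots the local minimum must be negative and the local maximum positive, which forces the interleaving $\xi^- < x_1 < \xi^0 < x_2 < \xi^+$.

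The conclusion then follows from the sign of $f'$ at the outer roots: both $\xi^-$ and $\xi^+$ lie on decreasing branches of $f$, where $f'(\xi^{\pm}) = \gamma'(\xi^{\pm}) - 1 < 0$, i.e.\ $\gamma'(\xi^{\pm}) < 1$; combined with $\gamma'(\xi^{\pm}) > 0$ this yields $0 < \gamma'(\xi^{\pm}) < 1$, so $\xi^-$ and $\xi^+$ are stable. By the same token $\xi^0$ sits on the increasing branch, where $\gamma'(\xi^0) > 1$, so the middle fixed point is unstable. The main obstacle I anticipate is the careful bookkeeping in the previous paragraph, namely confirming that three real roots can occur only when the local extreme values of $f$ straddle zero and that this pins the roots down as $\xi^- < x_1 < \xi^0 < x_2 < \xi^+$; the derivative computation and the stability criterion itself are routine.
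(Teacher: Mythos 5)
Your proof is correct, and at the decisive step it takes a genuinely different --- and more rigorous --- route than the paper's. Both arguments start from the same linear-stability criterion $|\gamma'(\xi)|<1$ and the same derivative $\gamma'(x)=w\,\mathrm{sech}^2(wx+b)$; but where the paper then argues qualitatively that $\mathrm{sech}^2(w\xi^{\pm}+b)$ is ``close to $1$ but slightly less'' at the outer fixed points (a claim that, taken literally, would give $\gamma'(\xi^{\pm})\approx w>1$, the opposite of what is needed), you exploit the geometry of $f(x)=\gamma(x)-x$: since $\gamma'$ is unimodal with peak $w>1$, the set where $\gamma'>1$ is exactly the interval $(x_1,x_2)$ between the two critical points of $f$, and three distinct roots force the strict interleaving $\xi^-<x_1<\xi^0<x_2<\xi^+$, so the outer roots lie on decreasing branches of $f$, giving $\gamma'(\xi^{\pm})<1$ directly, while $\gamma'(\xi^0)>1$ falls out for free. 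Your identity $\gamma'(\xi)=w(1-\xi^2)$ also disposes cleanly of the lower bound $\gamma'(\xi^{\pm})>-1$. Two small points to tighten: (i) the hypothesis of three fixed points itself forces $w>1$ (if $w\le 1$ then $f'\le 0$ everywhere, so $f$ has at most one root, and $w\le 0$ is likewise excluded since a non-increasing $\gamma$ meets the identity only once), so you need not import this from Theorem~\ref{th:fp_tanh}, whose hypotheses are not literally assumed here; and (ii) the tangency cases $f(x_1)=0$ or $f(x_2)=0$ should be explicitly ruled out --- a tangent root is a double root and leaves only two distinct fixed points --- which is exactly the bookkeeping you flagged. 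With those two sentences added, your argument is a complete, self-contained proof, whereas the paper's stability analysis remains heuristic at precisely the point your interleaving argument makes precise.
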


\begin{proof}
Let us start by defining a fixed point of the function $\gamma(x) = \tanh(wx+b)$. A point $x$ is a fixed point if:

\[
\gamma(x) = x \quad \Rightarrow \quad \tanh(wx + b) = x.
\]

We are given that there are three fixed points $\xi^-, \xi^0, \xi^+$ such that:

\[
-1 < \xi^- < \xi^0 < \xi^+ < 1.
\]

Step 1: Stability Criterion for Fixed Points
A fixed point $\xi$ is considered \textbf{stable} if the magnitude of the derivative of $\gamma(x)$ at $\xi$ is less than 1, i.e.,

\[
|\gamma'(\xi)| < 1.
\]

Conversely, a fixed point is \textbf{unstable} if:

\[
|\gamma'(\xi)| > 1.
\]

Step 2: Derivative of the Function $\gamma(x)$
We compute the derivative of $\gamma(x) = \tanh(wx+b)$:

\[
\gamma'(x) = \frac{d}{dx} \left( \tanh(wx + b) \right).
\]

Recall that the derivative of the hyperbolic tangent function is:

\[
\frac{d}{dx} \tanh(x) = \text{sech}^2(x),
\]

where $\text{sech}(x) = \frac{2}{e^x + e^{-x}}$. Using the chain rule, we obtain:

\[
\gamma'(x) = w \cdot \text{sech}^2(wx + b).
\]

Thus, at a fixed point $\xi$, the derivative is:

\[
\gamma'(\xi) = w \cdot \text{sech}^2(w\xi + b).
\]

Step 3: Stability Analysis at Each Fixed Point
We will now analyze the derivative at each of the three fixed points to determine their stability.

1. \underline{Middle Fixed Point $\xi^0$}:

   Since $\xi^0$ is the middle fixed point, the function $\gamma(x)$ has a steep slope at $\xi^0$. Intuitively, the slope of $\tanh(x)$ around the origin (and for values near zero) is steep, making $|\gamma'(\xi^0)| > 1$. Thus, $\xi^0$ is an \textbf{unstable} fixed point.

2. \underline{Leftmost Fixed Point $\xi^-$}:

   Consider the derivative at the leftmost fixed point $\xi^-$:

   \[
   \gamma'(\xi^-) = w \cdot \text{sech}^2(w\xi^- + b).
   \]

   Since $\xi^-$ is smaller in magnitude compared to $\xi^0$, the value of $\text{sech}^2(w\xi^- + b)$ is close to 1 but slightly less, and thus:

   \[
   |\gamma'(\xi^-)| < 1.
   \]

   This implies that the fixed point $\xi^-$ is \textbf{stable}.

3. \underline{Rightmost Fixed Point $\xi^+$}:

   Similarly, for the rightmost fixed point $\xi^+$:

   \[
   \gamma'(\xi^+) = w \cdot \text{sech}^2(w\xi^+ + b).
   \]

   Since $\xi^+$ is greater than $\xi^0$, the value of $\text{sech}^2(w\xi^+ + b)$ is also close to 1 but less than at $\xi^0$, leading to:

   \[
   |\gamma'(\xi^+)| < 1.
   \]

   This means that the fixed point $\xi^+$ is \textbf{stable}.

Thus we have shown that for the three fixed points $\xi^-$, $\xi^0$, and $\xi^+$ of the function $\gamma(x) = \tanh(wx+b)$:

- $\xi^0$ is an \textbf{unstable} fixed point because $|\gamma'(\xi^0)| > 1$.
- $\xi^-$ and $\xi^+$ are \textbf{stable} fixed points because $|\gamma'(\xi^-)| < 1$ and $|\gamma'(\xi^+)| < 1$.

Thus, the theorem is proven.
\end{proof}
We can make the following observations about the fixed points of \textit{sigmoid} and \textit{tanh} functions:
\begin{itemize}
	\item If one fixed point exists, then it is a stable fixed point
	\item If two fixed point exists, then one fixed point is stable and other is unstable. 
	\item If three fixed point exists, then two fixed points are stable and one is unstable.
\end{itemize}
More details regarding these observations can be found in the Appendix.

\section{Counter Machines}
Counter machines \cite{fischer_counter_1968} are abstract machines composed of finite state automata controlling one or more counters. A counter can either increment ($+1$), decrement ($-1$ if $>0$), clear ($\times 0$), do nothing ($+0$). A counter machine can be formally defined as:

\begin{definition}
	A counter machine (CM) is a 7-tuple $(Q,\Sigma, q_0, F, \delta, \gamma, \mathbf{1}_{=0})$ where
	\begin{itemize}
		\item $\Sigma$ is a finite alphabet
		\item $Q$ is a set of states with $q_0 \in Q$ as initial state
		\item $F \subset Q$ is a set of accepting states. 
		\item $\mathbf{1}_{=0}$ checks the state of the counter and returns $1$ if counter is zero else returns $0$
		\item $\gamma$ is the counter update function defined as:
			\begin{equation}
				\gamma : \Sigma \times Q \times \mathbf{1}_{=0} \rightarrow \{\times 0, -1, +0, +1\} 
			\end{equation}
		
		\item $\delta$ is the state transition function defined as:
		\begin{equation}
			\delta : \Sigma \times Q \times \mathbf{1}_{=0} \rightarrow Q 
		\end{equation}
	\end{itemize}
\end{definition}
Acceptance of a string in a counter machine can be assessed by either the final state is in $F$ or the counter reaches $0$ at the end of the input.

\section{Learning to accept $a^nb^n$ with LSTM}
\label{sec:lstm_anbn}
LSTM \cite{lstm} is a gated RNN cell. The LSTM state is a tuple $(h, c)$ where $h$ is popularly known as hidden state and $c$ is known as cell state. 
\begin{align}
i_t &= \sigma(W_ix_t + U_ih_{t-1} + b_i)\\
f_t &= \sigma(W_fx_t + U_fh_{t-1} + b_f)\\
o_t &= \sigma(W_ox_t + U_oh_{t-1} + b_o)\\
\tilde{c_t} &= \tanh(W_cx_t + U_ch_{t-1} + b_c) \label{eq:cntr_action}\\
c_t &= f_t \odot c_{t-1} + i_t \odot \tilde{c_t} \label{eq:cntr_update} \\
h_t &= o_t \odot tanh(c_t)	
\end{align}

A typical binary classification network with LSTM cell is composed of two parts:
\begin{enumerate}
	\item The enocoder recurrent network 
	\begin{equation}
		(h,c)_{t+1} = LSTM(x, (h, c)_t)
	\end{equation}
	\item A classification layer, usually a single perceptron layer followed by a sigmoid
	\begin{equation}
		p = \sigma(Wh_{\tau} + b)
		\label{eq:model}
	\end{equation}
	In case of multiclass classification, $\sigma$ is replaced by \textit{softmax} function.
\end{enumerate}

Following the construction of \cite{merrill-etal-2020-formal} we can draw parallels between the workings of counter machine and LSTM cell.  Here $i_t$ decides wheather to execute $+0$, while $\{ +1, -1 \}$  are decided by $\tilde{c_t}$. To execute $\times 0$ both $f_t$ and $i_t$ needs to be $0$.

Also the cell state and hidden state of LSTM have \textit{tanh} as discriminant function. In the case of $a^nb^n$, a continuous stream of $a$ is followed by an equal number of $b$, which creates an iterative execution of LSTM cell, making the output of discriminant functions closer to their fixed points. For maximum learnability, we can assume $\tilde{c_t} \in \{ \xi^- , \xi^+ \}$. Thus maximum final state values for $a^nb^n$ will be:
\begin{align}
	c_{a^nb^n} &= n (\xi^+ + \xi^-) \label{eq:c} \\
	h_{a^nb^n} &= \tanh(n (\xi^+ + \xi^-) ) \label{eq:h}
\end{align}

From the above equations we can see that, while cell state is unbounded, hidden state is bounded in range $]-1, 1[$. In our experiments we see that hidden state saturates to boundary values faster than is required to maintain the count. Formally, for some fairly moderate $\alpha$ and $\beta$ we can reach a point where $|\tanh(\alpha \xi^+ + \beta \xi^-) - \tanh(\alpha \xi^+ + (\beta + 1) \xi^-)| < \epsilon $, where $\epsilon$ is the precision of the classification layer.

Saturation of hidden state is desired from the perspective of consistent calculation of counter updates. In the LSTM cell, saturated hidden state means more stable gates which in turn leads to consistent cell state. However from the perspective of the classification layer, a saturated hidden state does not offer much information for a robust classification.

\section{Precision of Neural Network}
Numerical precision plays an important role in the partition of feature space by the classifier network, especially when the final hidden state from RNN either collapses towards $0$ or saturates asymptotically to the boundary values.

\begin{theorem}
Given a neural network layer with an input vector \(\mathbf{H} \in \mathbb{R}^n\), a weight matrix \(\mathbf{W} \in \mathbb{R}^{n \times m}\), a bias vector \(\mathbf{b} \in \mathbb{R}^m\), and a sigmoid activation function \(\sigma\), the output of the layer is defined by \(f(\mathbf{H}) = \sigma(\mathbf{H} \cdot \mathbf{W} + \mathbf{b})\). The capacity of this layer to encode information is influenced by both the precision of the floating-point representation and the dynamical properties of the sigmoid function.

Let \( \epsilon \) be the machine epsilon, which represents the difference between 1 and the least value greater than 1 that is representable in the floating-point system used by the network. Assume the elements of \(\mathbf{W}\) and \(\mathbf{b}\) are drawn from a Gaussian distribution and are fixed post-initialization.

Then, the following bounds hold for the output of the network layer:

\begin{enumerate}
    \item The granularity of the output is limited by \( \epsilon \), such that for any element \( h_i \) in \(\mathbf{H}\) and corresponding weight \( w_{ij} \) in \(\mathbf{W}\), the difference in the layer's output due to a change in \( h_i \) or \( w_{ij} \) less than \( \epsilon \) may be imperceptible.

    \item For \( z = \mathbf{H} \cdot \mathbf{W} + \mathbf{b} \), the sigmoid function \( \sigma(z) \) saturates to 1 as \( z \to \infty \) and to 0 as \( z \to -\infty \). The saturation points occur approximately at \( z > \log(\frac{1}{\epsilon}) \) and \( z < -\log(\frac{1}{\epsilon}) \), respectively.

    \item The precision of the network’s output is governed by the stable fixed points of the sigmoid function, which occur when \( \sigma(z) \) stabilizes at values near 0 or 1. If the dynamics of the network converge to one or more stable fixed points, the effective precision is reduced because minor variations in the input will not significantly alter the output.

    \item When three fixed points exist for the sigmoid function—two stable and one unstable—information encoding can become confined to the stable fixed points. This behavior causes the network to collapse to a discrete set of values, reducing its effective resolution.

    \item Therefore, the maximum discrimination in the output is not only limited by \( \epsilon \) but also by the attraction of the stable fixed points. The effective precision is bounded by both \( 1 - 2\epsilon \) and the dynamics that collapse the output towards these stable points.
\end{enumerate}
\end{theorem}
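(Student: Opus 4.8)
The plan is to prove the five claims largely independently and then tie them together in the final bound, starting with claim~2 since it is the one fully concrete piece. Writing $\sigma(z) = 1/(1+e^{-z})$, the distance to saturation is $1 - \sigma(z) = e^{-z}/(1+e^{-z})$, which falls below $\epsilon$ exactly when $e^{-z}/(1+e^{-z}) < \epsilon$; in the relevant regime $e^{-z}\ll 1$ this reduces to $e^{-z} < \epsilon$, i.e. $z > \log(1/\epsilon)$, and by the symmetry $\sigma(-z) = 1-\sigma(z)$ the lower threshold is $z < -\log(1/\epsilon)$. This is a one-line monotonicity argument that yields the stated approximate thresholds.

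For claim~1 I would formalize ``imperceptible'' through the floating-point model. The $j$-th output is $\sigma(z_j)$ with $z_j = \sum_i h_i w_{ij} + b_j$; a perturbation $\delta$ in $h_i$ (resp.\ $w_{ij}$) changes $z_j$ by $w_{ij}\delta$ (resp.\ $h_i\delta$) and hence changes the output by approximately $\sigma'(z_j)\,w_{ij}\,\delta$ to first order. Since outputs are stored with relative precision $\epsilon$, any change smaller than $\epsilon\,\sigma(z_j)$ is lost to rounding. Using $\sigma'(z) = \sigma(z)\big(1-\sigma(z)\big) \le 1/4$, I would solve for the threshold on $\delta$ below which the perturbation is imperceptible, and observe that in the saturated regime $\sigma'(z_j)\to 0$ this threshold diverges, so whole intervals of inputs collapse onto a single representable value.

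Claims~3 and~4 I would reduce to the stability facts already established for the parameterized sigmoid (the three-fixed-point structure and the two-stable/one-unstable classification summarized just before this theorem); the Gaussian initialization with sufficient variance guarantees that the large-$w$, multi-fixed-point regime is sampled with positive probability. The argument is that when the layer is iterated with fixed weights, the contraction property $|\sigma'(\xi^{\pm})| < 1$ forces every trajectory in the basin of a stable point to converge to it, while the unstable point $\xi^0$ with $|\sigma'(\xi^0)| > 1$ separates the two basins. Hence the reachable outputs collapse onto neighborhoods of $\{\xi^-,\xi^+\}$ whose radii are set by $\epsilon$, giving the discretization of claim~4 and the loss of effective resolution of claim~3.

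For claim~5 I would combine the pieces: by claim~2 the smallest representable output above $0$ and the largest below $1$ sit at distance $\approx\epsilon$ from the respective boundaries, so the usable output range has width at most $1 - 2\epsilon$, while claims~3 and~4 show the attainable values are further pinched toward the two stable fixed points within that range. I expect the main obstacle to be making ``effective precision'' precise enough that the $1-2\epsilon$ bound is a genuine statement rather than a heuristic: specifically, reconciling the continuous contraction picture of claims~3--4 with the discrete roundoff picture of claims~1--2, and justifying the factor of $2$ (one $\epsilon$-wide dead zone at each saturation end) instead of leaving it as an informal ``governed by'' clause.
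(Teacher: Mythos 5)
There is no paper proof to compare yours against: despite the pointer ``the detailed proof can be found in the appendix,'' the appendix never actually proves this theorem. The appendix section titled ``Complete Proofs Precision Theorem'' opens with a proof said to be ``for theorem 6.1,'' but that text is a near-verbatim copy of the proof of the later theorem on random-fixed-weights classification (which already appears, with proof, in the main text), and the second appendix proof addresses the theorem on encoding $a^nb^nc^n$ via stable fixed points; neither touches the five claims of this statement. The closest material is Appendix B, an estimation that fixes a $\tanh$ dynamic range of $1.8$, posits a minimum noticeable change of $10\epsilon$, and outputs $N_{\max} = f \times 1.8/(10 \times 1.19 \times 10^{-7})$ --- an asserted, not derived, analogue of your claims 1--2, and for $\tanh$ rather than the sigmoid layer of the statement. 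Your proposal is therefore strictly more substantive than anything the paper offers: the monotonicity derivation of the $\pm\log(1/\epsilon)$ saturation thresholds, the first-order imperceptibility bound $\sigma'(z_j)\,w_{ij}\,\delta$ with $\sigma' \le 1/4$, and the reduction of claims 3--4 to the contraction/repulsion structure $|\sigma'(\xi^{\pm})|<1$, $|\sigma'(\xi^{0})|>1$ established in Section 3 are arguments the paper never makes. Two caveats you should state explicitly if you write this up. First, claims 3--4 presuppose iterating the layer, which requires identifying input and output spaces ($m=n$); the statement's feedforward map $\mathbf{H}\mapsto\sigma(\mathbf{H}\cdot\mathbf{W}+\mathbf{b})$ has no dynamics or fixed points otherwise, and the scalar fixed-point theory of Section 3 further needs a coordinatewise or rank-one reduction before it applies to a full Gaussian $\mathbf{W}$. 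Second, your worry about claim 5 is well founded: ``effective precision bounded by $1-2\epsilon$'' is really a statement about the width of the usable output interval $[\epsilon, 1-\epsilon]$, and since the theorem never defines effective precision, your reading of it is as rigorous as the claim permits.
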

The detailed proof can be found in the appendix.

\begin{theorem}
Consider a recurrent neural network (RNN) with fixed weights and the hidden state update rule given by:
\[
\mathbf{h}_{t+1} = \tanh(\mathbf{W} \mathbf{h}_t + \mathbf{U} \mathbf{x}_t + \mathbf{b}),
\]
where \( \mathbf{W} \in \mathbb{R}^{d \times d} \), \( \mathbf{U} \in \mathbb{R}^{d \times m} \), \( \mathbf{b} \in \mathbb{R}^d \), and \( \mathbf{x}_t \) represents the input symbol. Given a bounded sequence length \( N \), the RNN can encode sequences of the form \( a^n b^n c^n \) by exploiting state dynamics that converge to distinct, stable fixed points in the hidden state space for each symbol. The expressivity of the RNN, equivalent to a deterministic finite automaton (DFA), enables the encoding of such grammars purely through state dynamics.
\end{theorem}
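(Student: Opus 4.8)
The plan is to reduce the statement to the classical fact that a saturating $\tanh$ RNN can simulate any deterministic finite automaton, and then observe that bounded-length acceptance of $a^n b^n c^n$ is a finite, hence regular, language. First I would note that for a fixed bound $N$ the set $L_N = \{a^n b^n c^n : 3n \le N\}$ is finite; equivalently, the restriction of $\{a^n b^n c^n\}$ to inputs of length at most $N$ is recognized by a DFA $M = (Q, \Sigma, q_0, F, \delta)$ whose states track the running count. Concretely $M$ uses $O(N)$ states: a phase $q^a_0,\dots,q^a_k$ that increments while reading $a$, a matched phase reading $b$, a matched phase reading $c$, plus a dead/reject state absorbing every illegal transition (a $b$ before any $a$, a $c$ before the $b$-phase is exhausted, a count mismatch, or overflow past $N$). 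Because $L_N$ is finite it is regular, so such an $M$ exists.

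Next I would realize $M$ inside the hidden-state dynamics. Using $d = \lceil \log_2 |Q| \rceil$ neurons (or $|Q|$ neurons under a one-hot scheme), assign to each DFA state a distinct corner of the hypercube $[-1,1]^d$, i.e. a vector with entries near $\pm 1$. The key step is to choose $\mathbf{W}, \mathbf{U}, \mathbf{b}$ of large enough magnitude that the pre-activation $\mathbf{W}\mathbf{h}_t + \mathbf{U}\mathbf{x}_t + \mathbf{b}$ is pushed well away from $0$ in every coordinate, so that $\tanh$ saturates and each coordinate lands in the regime where the iterate sits at a stable fixed point near $\pm 1$. The weights are designed so that when $\mathbf{h}_t$ encodes DFA state $q$ and the input is symbol $x$, the saturated image encodes exactly $\delta(x,q)$; this amounts to a finite table of linear-separability constraints on the corners, which is satisfiable because the corners are in general position and $\tanh$ acts coordinatewise. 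A final read-out layer of the form $\sigma(\mathbf{W}_o \mathbf{h}_\tau + b_o)$, as in Eq.~\ref{eq:model}, fires iff $\mathbf{h}_\tau$ encodes a corner in $F$.

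The stability and correctness claim then follows from the fixed-point theory developed above. At a saturated corner the Jacobian of the update map has entries $\text{sech}^2(\cdot)\,W_{ij}$ with $\text{sech}^2$ near $0$, so its spectral norm falls below $1$ and the corner is a stable fixed point of the induced dynamics — the multidimensional analogue of the scalar stable-fixed-point characterization established earlier. Consequently numerical perturbations do not flip the encoded state, and the basins of attraction of distinct corners are disjoint. Tracing any legal or illegal prefix through $M$ then shows the RNN's hidden state visits the corresponding corners and the read-out accepts exactly the strings in $L_N$.

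The main obstacle I anticipate is twofold. First, the construction is genuinely $N$-dependent: $|Q|$ and hence the number of neurons (or the magnitude of the weights separating the corners) grows with $N$, so the result is a statement about a fixed but arbitrarily large finite machine, not a single network handling all $n$ at once. I would make that dependence explicit rather than hide it, since it is precisely the paper's thesis that the network behaves as a DFA rather than a true unbounded counter. Second, the delicate part is verifying that the basins of attraction stay separated after composing the affine map with saturation for \emph{every} (state, symbol) pair simultaneously, i.e. that a single choice of $\mathbf{W}, \mathbf{U}, \mathbf{b}$ realizes the entire transition table while keeping every target a stable fixed point; this requires bounding the pre-activation margin uniformly over the finite transition set and invoking the saturation threshold $z > \log(1/\epsilon)$ (and its $\tanh$ analogue) from the precision theorem to guarantee that encoded states remain within decoder precision.
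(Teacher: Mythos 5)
Your proposal is essentially sound but takes a genuinely different --- and considerably more constructive --- route than the paper's own proof. The paper never builds a counting automaton: it \emph{assumes} the update map has three distinct stable fixed points $\xi_a^-$, $\xi_b^-$, $\xi_c^-$, one per input symbol, argues convergence to each of them during the runs $a^n$, $b^n$, $c^n$ via the eigenvalues of the Jacobian, and then delegates the actual recognition claim to cited DFA-equivalence results. In particular, the paper's dynamical picture cannot by itself verify equality of the three counts: once the state has converged to $\xi_c^-$, the count information is gone, so the argument as written establishes tracking of the phase structure $a^*b^*c^*$ rather than of the bounded language itself. Your proof closes exactly this gap by making the automaton explicit --- the length-bounded language is finite, hence regular, hence recognized by an $O(N)$-state DFA --- and then stably embedding that DFA at saturated corners of $[-1,1]^d$ in the Omlin--Giles style, with stability coming from the $\mathrm{sech}^2$-suppressed Jacobian at the corners; this is the multidimensional analogue of the paper's scalar stable-fixed-point results in Section 3, so both arguments ultimately rest on the same stability mechanism. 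What the paper's route buys is brevity and a direct link to its fixed-point theory; what yours buys is an actual recognition proof and an explicit account of the $N$-dependence, which in fact sharpens the paper's thesis that the trained network behaves as a finite-state machine rather than a true counter.

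One step you should repair before calling the construction complete: realizing the transition table is not merely ``a finite table of linear-separability constraints\ldots satisfiable because the corners are in general position.'' In a first-order update the state and symbol enter \emph{additively}, and an arbitrary transition table restricted to one-hot state corners need not be expressible as a threshold of a row score plus a column score; XOR-type transition patterns are a concrete obstruction. The standard fix is the split-state encoding: index neurons by (state, symbol) pairs, so that each target neuron computes an OR over predecessor pairs ANDed with the current input symbol, which a single saturated layer implements with a threshold of roughly $1.5c$. This costs $O(|Q|\,|\Sigma|)$ neurons rather than $|Q|$, but it makes the uniform pre-activation margin you invoke --- and hence the separation of the basins of attraction across the whole transition table --- straightforward to verify.
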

The detailed proof can be found in appendix

\begin{theorem}
Given an RNN with fixed random weights and trainable sigmoid layer has sufficient capacity to encode complex grammars. Despite the randomness of the recurrent layer, the network can still classify sequences of the form \( a^n b^n c^n \) by leveraging the distinct distributions of the hidden states induced by the input symbols. The classification layer learns to map the hidden states to the correct sequence class, even for bounded sequence lengths \( N \).
\end{theorem}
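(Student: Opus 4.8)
The plan is to treat the fixed-weight recurrent layer as a random feature map and reduce the statement to a finite linear-separability question, then close it with a general-position argument. First I would exploit the bounded length $N$: over the alphabet $\{a,b,c\}$ there are only finitely many strings of length at most $N$, say $s_1,\dots,s_P$, each carrying a label $y_i\in\{0,1\}$ (positive iff $s_i = a^nb^nc^n$ for some $n$). Since the recurrent weights $\mathbf{W},\mathbf{U},\mathbf{b}$ are fixed, the RNN induces a deterministic feature map $\phi(s_i)=\mathbf{h}_\tau^{(i)}\in\mathbb{R}^d$ obtained by unrolling $\mathbf{h}_{t+1}=\tanh(\mathbf{W}\mathbf{h}_t+\mathbf{U}\mathbf{x}_t+\mathbf{b})$. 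The claim then becomes: there exist classifier parameters $(\mathbf{W}_{\mathrm{cls}},b_{\mathrm{cls}})$ with $\sigma(\mathbf{W}_{\mathrm{cls}}\phi(s_i)+b_{\mathrm{cls}})$ matching $y_i$ for every $i$, i.e. the finite point set $\{\phi(s_i)\}$ must be linearly separable according to the labels, per the single-perceptron form of Equation~\eqref{eq:model}.

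Next I would establish that the feature points lie in general position almost surely over the Gaussian initialization. Because $\tanh$ is real-analytic, each coordinate of $\phi(s_i)$ is a real-analytic function of the entries of $(\mathbf{W},\mathbf{U},\mathbf{b})$, being a finite composition of analytic maps with the number of time steps bounded by $N$. Any degeneracy---two feature vectors coinciding, or some subset of at most $d+1$ of them being affinely dependent---is the vanishing of a real-analytic function of the weights. Provided this function is not identically zero, its zero set has Lebesgue measure zero, so the Gaussian-distributed weights place the $P$ points in general position with probability one.

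Finally I would convert general position into the desired classification. Halfspaces in $\mathbb{R}^d$ shatter any collection of at most $d+1$ affinely independent points, so whenever the reservoir width satisfies $d\ge P-1$---the precise meaning of \emph{sufficient capacity} here---the particular dichotomy $(y_1,\dots,y_P)$ is realized by some hyperplane $\mathbf{W}_{\mathrm{cls}}\cdot x+b_{\mathrm{cls}}=0$ that separates positives from negatives with a strictly positive margin $\rho>0$. Scaling $(\mathbf{W}_{\mathrm{cls}},b_{\mathrm{cls}})$ by $t$ and letting $t\to\infty$ drives each sigmoid output to its correct limit in $\{0,1\}$, so the trainable sigmoid layer attains arbitrarily small error on all of $s_1,\dots,s_P$. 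This yields the stated capacity result.

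I expect the general-position step to be the main obstacle. The delicate part is verifying that the relevant analytic function is \emph{not} identically zero---equivalently, exhibiting at least one choice of reservoir weights for which the $P$ final states are distinct and affinely independent---since $\phi$ is a deep composition of saturating nonlinearities whose image could in principle collapse onto a lower-dimensional set. A secondary caveat, worth flagging explicitly, is that $P$ grows exponentially in $N$, so the capacity condition $d\ge P-1$ forces the reservoir dimension to scale with the length bound; the theorem is therefore an existence statement for fixed $N$, not a uniform guarantee independent of sequence length.
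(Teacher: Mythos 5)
Your proposal takes a genuinely different route from the paper's own proof, and it is the more rigorous of the two. The paper's proof is essentially a three-step narrative: (1) the random recurrent layer still produces input-dependent trajectories, (2) the final hidden states for different sequence classes ``cluster in distinct regions'' and ``form distinct, linearly separable patterns,'' and (3) therefore the trainable classifier can separate them. No step is actually established --- linear separability is asserted, not derived, and there is no measure-theoretic statement, no capacity condition on $d$, and no argument about what the sigmoid layer can realize. Your reduction to a finite linear-separability problem over the $P$ strings of length at most $N$, the almost-sure general-position argument via real-analyticity of the unrolled map and the measure-zero property of zero sets of nonzero analytic functions, and the closing step via shattering of affinely independent points plus scaling the sigmoid to drive outputs to $\{0,1\}$, together turn the paper's informal claims into a precise statement: with probability one over the Gaussian draw, a reservoir of width $d \ge P-1$ admits a perfect classifier. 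What the paper's version ``buys'' is only that it speaks to the experimental regime (hidden sizes of $6$--$12$ neurons), where your exponential width condition clearly fails; but at that level of generality the paper proves nothing, so the trade is in your favor.

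The one step you flag --- showing the degeneracy polynomial (sum of squared minors) is not identically zero in the weights --- is real but closable, and you have correctly identified that a single witness configuration suffices. A clean witness: implement the prefix-tree DFA over $\{a,b,c\}^{\le N}$ with one saturated $\tanh$ unit per prefix, in the style of the stable DFA-in-RNN constructions of Omlin and Giles that this paper itself cites. With sufficiently large weights the final states are arbitrarily close to distinct one-hot vectors $e_1,\dots,e_P$, which are affinely independent; since affine independence is an open condition, approximate saturation suffices, so the analytic function is nonzero at this configuration and your measure-zero argument goes through. One further caveat worth stating in your write-up: your result shows a separating classifier \emph{exists}; the theorem also asserts the layer ``learns'' it. For a linearly separable finite set this is standard (e.g., perceptron convergence, or global minimization of the convex logistic loss approaching the max-margin separator), but it deserves a sentence, since it is the only part of the claim not already covered by your construction.
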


\begin{proof}
The proof proceeds in three steps: (1) analyzing the hidden state dynamics in the presence of random fixed weights, (2) demonstrating that distinct classes (e.g., \( a^n b^n c^n \)) can still be linearly separable based on the hidden states, and (3) showing that the classification layer can be trained to distinguish these hidden state patterns.

\textbf{1. Hidden State Dynamics with Fixed Random Weights:}

Consider an RNN with hidden state \( \mathbf{h}_t \in \mathbb{R}^d \) updated as:
\[
\mathbf{h}_{t+1} = \tanh(\mathbf{W} \mathbf{h}_t + \mathbf{U} \mathbf{x}_t + \mathbf{b}),
\]
where \( \mathbf{W} \in \mathbb{R}^{d \times d} \) and \( \mathbf{U} \in \mathbb{R}^{d \times m} \) are randomly initialized and fixed. The hidden state dynamics in this case are governed by the random projections imposed by \( \mathbf{W} \) and \( \mathbf{U} \).

Although the weights are random, the hidden state \( \mathbf{h}_t \) still carries information about the input sequence. Specifically, different sequences (e.g., \( a^n \), \( b^n \), and \( c^n \)) induce distinct trajectories in the hidden state space. These trajectories are not arbitrary but depend on the input symbols, even under random weights.

\textbf{2. Distinguishability of Hidden States for Different Sequence Classes:}

Despite the randomness of the weights, the hidden state distributions for different sequences remain distinguishable. For example:
- The hidden states after processing \( a^n \) tend to cluster in a specific region of the state space, forming a characteristic distribution.
- Similarly, the hidden states after processing \( b^n \) and \( c^n \) will occupy different regions.

These clusters may not correspond to single fixed points as in the trained RNN case, but they still form distinct, linearly separable patterns in the high-dimensional space.

\textbf{3. Training the Classification Layer:}

The classification layer is a fully connected layer that maps the final hidden state \( \mathbf{h}_N \) to the output class (e.g., "class 1" for \( a^n b^n c^n \)). The classification layer is trained using a supervised learning approach, typically minimizing a cross-entropy loss.

Because the hidden states exhibit distinct distributions for different sequences, the classification layer can learn to separate these distributions. In high-dimensional spaces, even random projections (as induced by the random recurrent weights) create enough separation for the classification layer to distinguish between different classes.

Thus even with random fixed weights, the hidden state dynamics create distinguishable patterns for different input sequences. The classification layer, which is the only trained component, leverages these patterns to correctly classify sequences like \( a^n b^n c^n \). This demonstrates that the RNN’s expressivity remains sufficient for the classification task, despite the randomness in the recurrent layer.

\end{proof}

\section{Experiment Setup}

\begin{table*}[h]
\centering
\begin{tabular}{lccccccccc}
\hline
\multicolumn{2}{l}{\textbf{\begin{tabular}[c]{@{}c@{}}model $\rightarrow$\\ (trained layers)\end{tabular}}} & \multicolumn{2}{c}{\begin{tabular}[c]{@{}c@{}}lstm \\ (all layers)\end{tabular}} & \multicolumn{2}{c}{\begin{tabular}[c]{@{}c@{}}lstm\\ (classifier-only)\end{tabular}} & \multicolumn{2}{c}{\begin{tabular}[c]{@{}c@{}}o2rnn\\ (all layers)\end{tabular}} & \multicolumn{2}{c}{\begin{tabular}[c]{@{}c@{}}o2rnn\\ (classifier-only)\end{tabular}} \\ \hline 
\textbf{grammars} & \textbf{sdim} & \textbf{max} & \textbf{mean±std} & \textbf{max} & \textbf{mean±std} & \textbf{max} & \textbf{mean±std} & \textbf{max} & \textbf{mean±std} \\ \hline
Dyck-$1$                                                & 2                                                  & 85.95                               & 82.88 ± 2.54                               & 73.89                               & 72.3 ± 1.28                               & 83.38                               & 80.57 ± 2.37                               & 63.88                               & 63.85 ± 0.02                               \\
Dyck-$2$                                                & 4                                                  & 98.65                               & 87.35 ± 8.3                                & 72.99                               & 70.05 ± 3.35                              & 86.85                               & 82.57 ± 2.11                               & 63.67                               & 60.75 ± 2.41                               \\
Dyck-$4$                                               & 8                                                  & 99.2                                & 86.57 ± 8.01                               & 71.93                               & 69.58 ± 3.71                              & 99.11                               & 94.55 ± 0.85                               & 63.71                               & 59.88 ± 2.44                               \\
Dyck-$6$                                                & 12                                                 & 97.45                               & 87.34 ± 8.85                               & 72.27                               & 68.64 ± 2.24                              & 99.54                               & 98.4 ± 1.42                                & 62.74                               & 60.33 ± 1.79                               \\ \hline
$a^nb^nc^n$                                                 & 6                                                  & 98.13                               & 90.17 ± 15.30                              & 81.09                               & 78.60 ± 1.54                              & 97.86                               & 97.27 ± 0.35                               & 69.71                               & 69.66 ± 0.03                               \\
$a^nb^nc^nd^n$                                               & 8                                                  & 98.33                               & 90.45 ± 14.22                              & 80.95                               & 79.59 ± 0.97                              & 97.24                               & 96.11 ± 2.43                               & 71.8                                & 71.74 ± 0.03                               \\
$a^nb^ma^mb^n$                                               & 8                                                  & 99.83                               & 98.05 ± 4.65                               & 70.83                               & 69.69 ± 0.74                              & 99.76                               & 99.58 ±  0.19                              & 58.7                                & 58.08 ± 0.48                               \\
$a^nb^ma^mb^m$                                               & 8                                                  & 99.93                               & 99.64 ± 0.56                               & 73.25                               & 70.59 ±1.18                               & 99.43                               & 99.13 ± 0.38                               & 58.67                               & 56.95 ± 2.58                               \\ \hline
\end{tabular}
\caption{Performance comparison of RNNs trained with all layers and when trained with all weights frozen except classifier with \textit{hard 1} negative string sampling.}
\label{tab:main_res}
\end{table*}

\textbf{Models}
We evaluate the performance of two types of Recurrent Neural Networks (RNNs): Long Short-Term Memory (LSTM) networks \cite{lstm} and Second-Order Recurrent Neural Networks (O2RNNs) \cite{OMLIN1992361}. The LSTM is considered a first-order RNN since its weight tensors are second-order matrices, whereas the O2RNN utilizes third-order weight tensors for state transitions, making it a second-order RNN. The state update for the O2RNN is defined as follows:

\[
h_{i}^{(t)} = \sum_{j,k} w_{ijk} x^{(t)}_j h^{(t-1)}_k + b_i,
\]

where \( w_{ijk} \) is a third-order tensor that models the interactions between the input vector \( x^{(t)} \) and the previous hidden state \( h^{(t-1)} \), and \( b_i \) is the bias term. All models consist of a single recurrent layer followed by a sigmoid activation layer for binary classification, as defined in Equation \ref{eq:model}.

\subsection*{Datasets}
We conduct experiments on eight different formal languages, divided into two categories: Dyck languages and counter languages. The Dyck languages include Dyck-1, Dyck-2, Dyck-4, and Dyck-6, which vary in the complexity and depth of nested dependencies. The counter languages include $a^nb^nc^n$, $a^nb^nc^nd^n$, $a^nb^ma^mb^n$, and $a^nb^ma^mb^m$. Each language requires the network to learn specific counting or hierarchical patterns, posing unique challenges for generalization.

The number of neurons used in the hidden state for each RNN configuration is summarized in Table \ref{tab:main_res}. To ensure robustness and a fair comparison, all models were trained on sequences with lengths ranging from 1 to 40 and tested on sequences of lengths ranging from 41 to 500, thereby evaluating their generalization capability on longer and more complex sequences.

\subsection*{Training and Testing Methodology}
Since the number of possible sequences grows exponentially with length (for a sequence of length $l$, there are $2^l$ possible combinations), we sampled sequences using an inverse exponential distribution over length, ensuring a balanced representation of short and long strings during training. Each model was trained to predict whether a given sequence is a positive example (belongs to the target language) or a negative example (does not follow the grammatical rules of the language).

For all eight languages, positive examples are inherently sparse in the overall sample space. This sparsity makes the generation of negative samples crucial to ensure a challenging and informative training set. We generated three different datasets, each using a distinct strategy for sampling negative examples:

\begin{enumerate}
    \item \textbf{\textit{Hard 0 (Random Sampling)}}: Negative samples were randomly generated from the sample space without any structural similarity to positive samples. This method creates a broad variety of negatives, but many of these are trivially distinguishable, providing limited learning value for more sophisticated models.
    
    \item \textbf{\textit{Hard 1 (Edit Distance Sampling)}}: Negative samples were constructed based on their string edit distance from positive examples. Specifically, for a sequence of length $l$, we generated negative strings that have a maximum edit distance of $0.25l$. This approach ensures that negative samples are structurally similar to positive ones, making it challenging for the model to differentiate them based solely on surface-level patterns.
    
    \item \textbf{\textit{Hard 2 (Topological Proximity Sampling)}}: Negative samples were generated using topological proximity to positive strings, based on the structural rules of the language. For instance, in the counter language $a^nb^nc^n$, a potential negative string could be $a^{n-1}b^{n+1}c^n$, which maintains a similar overall structure but violates the language’s grammatical constraints. This method ensures that the negative samples are more nuanced, requiring the model to maintain precise state transitions and counters to correctly classify them.
\end{enumerate}
 
\textbf{Training Details}
For reproducibility and stability we train each model over $10$ seeds and report the mean, standard deviation, and maximum of accuracy over test set. We use stochastic gradient descent optimizer for a maximum of $100,000$ iterations. We employ a batch size of $128$ and a learning rate of $0.01$. Validation is run very $100$ iterations and training is stopped if validation loss does not improve for $7000$ consecutive iterations. All models use binary cross entropy loss as optimization function.

 We use uniform random initialization ($\mathcal{U}(-\sqrt{k}, \sqrt{k})$, where $k$ is the hidden size) for LSTM weights and normal initialization ($\mathcal{N}(0, 0.1)$) for O2RNN for all experiments, except figure \ref{fig:initialization} which compares performance of LSTM and O2RNN on $a^nb^nc^nd^n$ with the following initialization strategies:
 \begin{enumerate}
     \item \textit{uniform initialization} : $w \sim \mathcal{U}(-0.1, 0.1)$
     \item \textit{orthogonal initialization} : gain = $1$
     \item \textit{sparse initialization} : sparsity = $0.1$, all non zero $w_{ij}$ are sampled from $\mathcal{N}(0, 0.01)$
 \end{enumerate}

All the biases are initialized with a constant value of $0.01$
 
 Tables \ref{tab:main_res} and \ref{tab:hard} show results comparing models trained in two different ways : 
 (1) \textit{all layers}: all layers of the model are trained, and (2) \textit{classifier-only}: weights of the RNN cells are frozen after random initialization and only classifier is trained.
 
We use Nvidia 2080ti GPUs to run our experiments with training times varying from under $15$ minutes for a simpler dataset like Dyck-$1$ on O2RNN, to over $60$ minutes for a counter language on LSTM. In total, we train $700$ models for our main results with over $400$ hours of cumulative GPU training times.


\section{Results and Discussion}

%

%

\begin{table*}[h]
\centering
\begin{tabular}{lccccccccc}
\hline
                          & \multicolumn{1}{l}{}         & \multicolumn{2}{c}{\begin{tabular}[c]{@{}c@{}}lstm\\ (all layers)\end{tabular}} & \multicolumn{2}{c}{\begin{tabular}[c]{@{}c@{}}lstm\\ (classifier-only)\end{tabular}} & \multicolumn{2}{c}{\begin{tabular}[c]{@{}c@{}}o2rnn\\ (all layers)\end{tabular}} & \multicolumn{2}{c}{\begin{tabular}[c]{@{}c@{}}o2rnn\\ (classifier-only)\end{tabular}} \\ \hline
\textbf{grammar}                   & \multicolumn{1}{l}{\textbf{neg set}} & \textbf{max}                                & \textbf{mean ± std}                                 & \textbf{max}                                 & \textbf{mean ± std}                                & \textbf{max}                                 & \textbf{mean ± std}                                 & \textbf{max}                                 & \textbf{mean ± std}                                 \\ \hline
\multirow{3}{*}{$a^nb^nc^n$}   & hard 0                            & 99.92                              & 99.28 ± 0.29                               & 96.13                               & 93.14 ± 2.53                              & 99.61                               & 99.27 ± 0.48                               & 83.37                               & 83.32 ± 0.03                               \\
                          & hard 1                            & 98.13                              & 90.17 ± 15.30                              & 81.09                               & 78.60 ± 1.54                              & 97.86                               & 97.27 ± 0.35                               & 69.71                               & 69.66 ± 0.03                               \\
                          & hard 2                            & 87.49                              & 74.35 ± 13.23                              & 75.64                               & 74.48 ± 0.73                              & 86.42                               & 82.10 ± 3.3                                & 69.94                               & 69.85 ± 0.07                               \\ \hline
\multirow{3}{*}{$a^nb^nc^nd^n$} & hard 0                            & 99.59                              & 99.36 ± 0.19                               & 98.1                                & 95.94 ± 1.49                              & 99.48                               & 98.91 ± 1.17                               & 87.53                               & 87.5 ± 0.02                                \\
                          & hard 1                            & 98.33                              & 90.45 ± 14.22                              & 80.95                               & 79.59 ± 0.97                              & 97.24                               & 96.11 ± 2.43                               & 71.8                                & 71.74 ± 0.03                               \\
                          & hard 2                            & 85.81                              & 71.66 ± 12.21                              & 75.33                               & 74.47 ± 1.29                              & 85.61                               & 80.84 ± 3.68                               & 70.72                               & 70.66 ± 0.03                               \\ \hline
\end{tabular}
\caption{Performance of RNNs declines when negative strings closer to positive strings are sampled for training}
\label{tab:hard}
\end{table*}
\begin{figure*}
\footnotesize
\centering

\begin{subfigure}{0.7\textwidth}
  \centering
  \includegraphics[width=.9\linewidth]{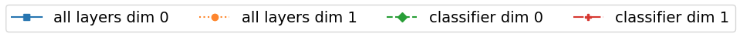}
\end{subfigure}

\begin{subfigure}{0.4\textwidth}
  \centering
  \includegraphics[width=0.95\linewidth]{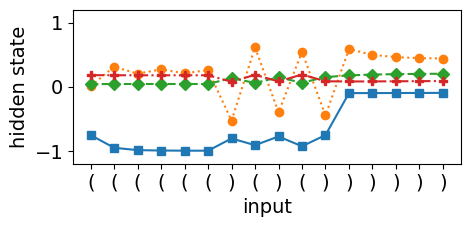}
  \caption{\footnotesize LSTM hidden state values for string (((((()()()))))) }
  \label{fig:lstm_h1}
\end{subfigure}%
\begin{subfigure}{0.4\textwidth}
  \centering
  \includegraphics[width=.95\linewidth]{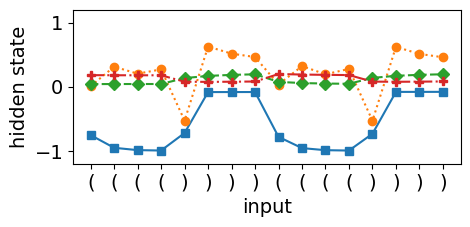}
  \caption{\footnotesize LSTM hidden state values for string (((())))(((())))}
  \label{fig:lstm_h2}
\end{subfigure}

\begin{subfigure}{0.4\textwidth}
  \centering
  \includegraphics[width=0.95\linewidth]{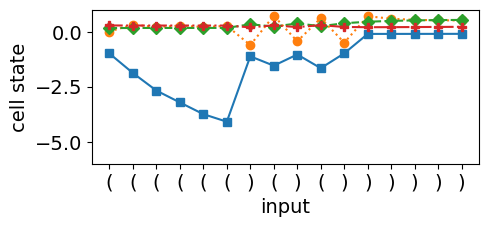}
  \caption{\footnotesize LSTM cell state values for string (((((()()()))))) }
  \label{fig:lstm_c1}
\end{subfigure}%
\begin{subfigure}{0.4\textwidth}
  \centering
  \includegraphics[width=.95\linewidth]{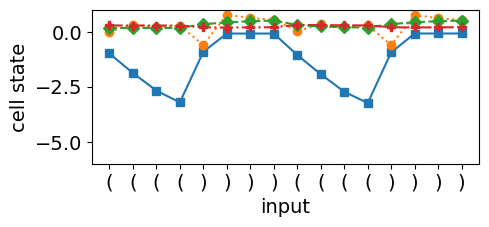}
  \caption{\footnotesize LSTM cell state values for string (((())))(((())))}
  \label{fig:lstm_c2}
\end{subfigure}
\caption{Transitions of hidden and cell states of LSTM for Dyck-$1$ grammars}
\label{fig:lstm_hc}
\end{figure*}
\begin{figure*}
\footnotesize
\centering
\begin{subfigure}{0.9\textwidth}
  \centering
  \includegraphics[width=.7\linewidth]{images/legend_hidden.png}
\end{subfigure}

\begin{subfigure}{0.4\textwidth}
  \centering
  \includegraphics[width=.95\linewidth]{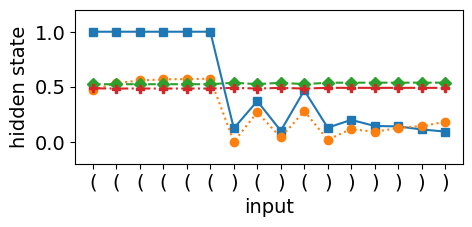}
  \caption{\footnotesize O2RNN state values for string (((((()()())))))}
  \label{fig:o2rnn_h1}
\end{subfigure}
\begin{subfigure}{0.4\textwidth}
  \centering
  \includegraphics[width=.95\linewidth]{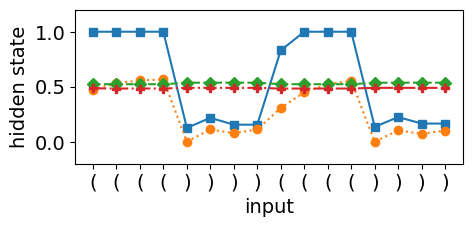}
  \caption{\footnotesize O2RNN state values for string (((())))(((())))}
  \label{fig:o2rnn_h2}
\end{subfigure}

\caption{Transitions of hidden states of O2RNN for Dyck-$1$ grammars}
\label{fig:o2rnn_h}
\end{figure*}


\begin{figure*}
\footnotesize
\centering

\begin{subfigure}{0.3\textwidth}
  \centering
  \includegraphics[width=.7\linewidth]{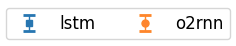}
  \label{fig:sub2}
\end{subfigure}

\begin{subfigure}{0.3\textwidth}
  \centering
  \includegraphics[width=0.95\linewidth]{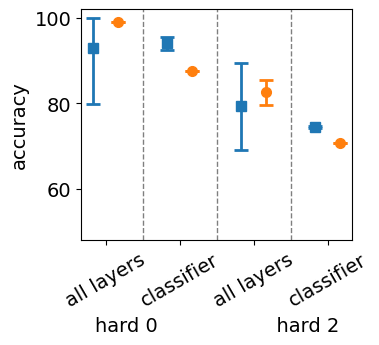}
  \caption{uniform initialization \\ $\sim \mathcal{U}(-0.1, 0.1)$}
  \label{fig:sub1}
\end{subfigure}%
\begin{subfigure}{0.3\textwidth}
  \centering
  \includegraphics[width=.95\linewidth]{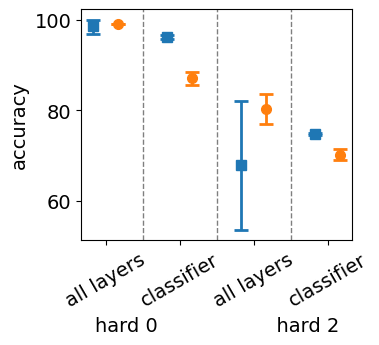}
  \caption{orthogonal initialization \\ gain = $1$}
  \label{fig:sub2}
\end{subfigure}
\begin{subfigure}{0.3\textwidth}
  \centering
  \includegraphics[width=.95\linewidth]{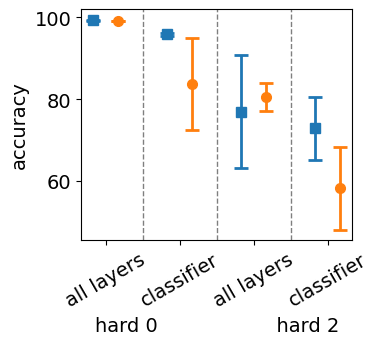}
  \caption{sparse initialization\\ sparsity = $0.1$}
  \label{fig:sub2}
\end{subfigure}
\caption{Performance of LSTM and O2RNN with various weight initialization strategies on $a^nb^nc^nd^n$}
\label{fig:initialization}
\end{figure*}

%

%

\textbf{Learnability of Dyck and Counter Languages}

The results from Table \ref{tab:hard} for negative set \textit{hard 0} confirm prior findings on the expressivity of LSTMs and RNNs on counter, context-free, and context-sensitive languages. A one-layer LSTM is theoretically capable of representing all classes of counter languages, indicating that its expressivity is sufficient to model non-regular grammars. However, the results for negative sets \textit{hard 1} and \textit{hard 2} indicate that this expressivity does not necessarily translate to practical learnability. The observed performance drop on these harder negative sets suggests that, despite the LSTM’s capacity to model such languages, its ability to generalize correctly under realistic training conditions is limited. This discrepancy between expressivity and learnability calls for a deeper understanding of how the network’s internal dynamics align with the objective function during training.

In particular, the sparsity of positive samples combined with naively sampled negative examples (as in \textit{hard 0}) allows the classifier to partition the feature space even when the internal feature encodings are not well-structured. This may give an inflated impression of the LSTM’s practical learnability. Tables \ref{tab:main_res} and \ref{tab:hard} compare fully trained models and classifier-only trained models, showing that the latter can achieve above-chance accuracy, even with minimal feature encoding. When negative samples are sampled closer to positive ones, as in \textit{hard 1} and \textit{hard 2}, the classifier struggles to maintain robust partitions, highlighting that the underlying feature encodings are not sufficiently aligned with the grammar structure. Future work can leverage fixed-point theory and expressivity analysis to establish better learnability bounds, offering a more principled approach to bridge the gap between theoretical capacity and empirical generalization.

\textbf{Stability of Feature Encoding in LSTM}

The stability of LSTM feature encodings is heavily influenced by the precision of the network’s internal dynamics. Across 10 random seeds, the standard deviation of accuracy for fully trained LSTMs is significantly higher compared to classifier-only models, particularly for challenging sampling strategies. For example, a fully trained LSTM on $a^nb^nc^n$ shows a standard deviation of 15.30\% compared to only 1.54\% for the classifier-only network using \textit{hard 1} sampling. This difference is less pronounced for \textit{hard 0} (0.29\%) but becomes more severe for \textit{hard 2} (13.23\%), indicating that instability in the learned feature encodings increases as the negative examples become structurally closer to positive ones. This instability is due to the LSTM’s reliance on its cell state to encode dynamic counters, which may not align precisely with the hidden state used for classification. As a result, slight deviations in internal dynamics cause substantial fluctuations in performance, suggesting a lack of robust fixed-point behavior in the cell state.

\textbf{Stability of Second-Order RNNs}

In contrast, the O2RNN, which utilizes a third-order weight tensor, demonstrates more consistent performance across different training strategies and random seeds. In all configurations, the O2RNN exhibits a standard deviation of less than 3\%, as shown in Tables \ref{tab:main_res} and \ref{tab:hard}. This stability is attributed to the higher-order interactions in the weight tensor, which drive the activation dynamics towards more stable fixed points. The convergence to these stable fixed points results in more robust internal state representations, making the O2RNN less sensitive to variations in training data and initialization. These findings are consistent with observations by \cite{dave2024stability} for regular and Dyck languages, suggesting that higher-order tensor interactions inherently stabilize the internal dynamics, improving the alignment between the learned state transitions and the theoretical expressive capacity.

\textbf{Dynamic Counting and Fixed Points}

The LSTM’s ability to perform dynamic counting is closely tied to the stability of its cell state, which relies on the fixed points of the \textit{tanh} activation function, as shown in Equation \ref{eq:c}. Figures \ref{fig:lstm_c1} and \ref{fig:lstm_c2} provide evidence of dynamic counting when the LSTM encounters consecutive open brackets, as indicated by the solid blue curve that decreases monotonically. This behavior is in accordance with Equation \ref{eq:h}, where the hidden state saturates to -1. However, when the network encounters a closing bracket, the cell state counter collapses, causing the hidden and cell states to start mirroring each other. This collapse occurs due to a mismatch between the counter dynamics and the LSTM’s training objective, which primarily optimizes for hidden state changes rather than directly influencing the cell state’s stability.

The root cause lies in the misalignment between the counter dynamics and the classification objective. Since the classification layer only uses the hidden state as input, any instability in the cell dynamics propagates through the hidden state, making it difficult for the network to maintain precise counter updates. In contrast, the O2RNN’s pure state approximation mechanism, as illustrated in Figure \ref{fig:o2rnn_h}, shows smoother transitions and stable dynamics, indicating that the network’s internal states are better aligned with its expressivity requirements.

\textbf{Effect of Initialization on Fixed-Point Stability}

The choice of initialization strategy significantly influences the stability of fixed points in RNNs. Figure \ref{fig:initialization} shows that the performance of both LSTMs and O2RNNs declines from \textit{hard 0} to \textit{hard 1} for all initialization strategies. However, we observe that the O2RNN is particularly sensitive to sparse initialization, while being more stable for the other two initialization methods. This sensitivity reflects the network’s reliance on precise weight configurations to drive its activation dynamics towards stable fixed points. In contrast, the LSTM’s performance is relatively invariant to initialization strategies, as the collapse of its counting dynamics is more directly influenced by interactions between its gates rather than by initial weight values. Understanding the role of initialization in achieving stable fixed-point dynamics is crucial for designing networks that can consistently maintain dynamic behaviors throughout training.

\section{Conclusion}

Our framework analyzed models based on the fixed-point theory of activation functions and the precision of classification, providing a unified approach to study the stability and learnability of recurrent networks. By leveraging this framework, we identified critical gaps between the theoretical expressivity and the empirical learnability of LSTMs on Dyck and counter languages. While the LSTM cell state theoretically has the capacity to implement dynamic counting, we observed that misalignment between the training objective and the network's internal state dynamics often causes a collapse of the counter mechanism. This collapse leads the LSTM to lose its counting capacity, resulting in unstable feature encodings in its final state representations. Additionally, our analysis showed that this instability is masked in standard training setups due to the power of the classifier to partition the feature space effectively. However, when the dataset includes closely related positive and negative samples, this instability prevents the network from maintaining clear separations between similar classes, ultimately resulting in a decline in performance. These findings underscore that, despite LSTMs' theoretical capability for complex pattern recognition, their practical performance is hindered by internal instability and sensitivity to training configurations. To address this gap, our fixed-point analysis focused on understanding the stability of activation functions, offering a mathematical framework that connects theoretical properties to empirical behaviors. This approach provides new insights into how activation stability can influence the overall learnability of a system, enabling us to better align theory and practice. Our results emphasize that improving the stability of counter dynamics in LSTMs can lead to more robust, generalizable memory-augmented networks. Ultimately, this work contributes to a deeper understanding of the learnability of LSTMs and other recurrent networks, paving the way for future research that bridges the divide between theoretical expressivity and practical generalization.

\bibliography{main_arxiv}
\bibliographystyle{ieeetr}
\appendix
\section{Appendix:  Additional Results and Discussion}

\subsection{Generalization Results}
Figure \ref{fig:generalization} shows the generalization plots for LSTM and O2RNN for both training strategies i.e. all layers trained and classifier-only trained. These networks were trained on string lengths $2-40$ and tested on lengths $41-500$. The plots show the distribution of performance across the test sequence lengths. Both RNNs maintain their accuracy across the test range indicating generalization of the results.

\subsection{Results with Transformers}
To examine the capacity of transformer encoder architecture and compare them with our results from RNNs, we train one layer transformer encoder architecture. For binary classification of counter languages, we adopt two different embedding strategies as input to the classifier:
\begin{enumerate}
    \item \textit{transformer-avg} : The classification layer receives the mean of all output embeddings generated by the transformer encoder as input feature.
    \item \textit{transformer-cls} : The classification layer receives the output  embedding of [CLS] token as input feature.
\end{enumerate}

We train single layer transformer encoder network on two counter languages $a^nb^nc^n$ and $a^nb^nc^nd^n$. We use the embedding dimension of $8$ with $4$ attention heads. 
Table \ref{tab:transformers} and figure \ref{fig:transformers} shows that one-layer transformer encoder model fails to learn counter languages. Among the two classification strategies, \textit{transformer-cls} shows high standard deviation in performance than \textit{transformer-avg} across $10$ seeds. \text{transformer-cls} model on some seed performed as high as $64 \%$ on $a^nb^nc^nd^n$ grammars, however the mean performance across $10$ seeds remained near $50 \%$. \textit{transformer-cls} model does not show any signs of training (table \ref{tab:wi}) for weight initialization strategies used for comparing RNNs. For most seeds, the network has $50 \%$ accuracy.

\subsection{Results on Penn Tree Bank dataset}
Table \ref{tab:ptb} compares O2RNN, LSTM and one-layer transformer encoder network on PTB dataset. O2RNN and LSTM are trained with hidden state size of $8$ for character level training, and with size $256$ for word level training. For transformer-encoder model we use similar embedding dimensions - $8$ for character level training and $256$ for word level training.
 


\begin{table}[!htbp]
\centering
\footnotesize
\begin{tabular}{llcc}
\hline
dataset                   & model                                                           & all layers & classifier-only \\ \hline
\multirow{3}{*}{ptb-char} & lstm                                                            & 3.1243     & 7.7886          \\
                          & o2rnn                                                           & 3.2911     & 8.4865          \\
                          & \begin{tabular}[c]{@{}l@{}}transformer-\\ -encoder\end{tabular} & 4.4389     & 9.6622          \\ \hline
\multirow{3}{*}{ptb-word} & lstm                                                            & 160.3073   & 403.9483        \\
                          & o2rnn                                                           & 283.5615   & 356.4486        \\
                          & \begin{tabular}[c]{@{}l@{}}transformer-\\ -encoder\end{tabular} & 196.8097   & 318.425         \\ \hline
\end{tabular}
\caption{Perplexity of LSTM, O2RNN, and transformer-encoder models on PTB dataset with all layers trained and classifier-only training. }
\label{tab:ptb}
\end{table}

\begin{table*}[!htbp]
\centering
\begin{tabular}{llccccc}
\hline
                            &                         & \multicolumn{1}{l}{} & \multicolumn{2}{c}{all layers} & \multicolumn{2}{c}{classifier} \\ \cline{4-7} 
initialization              & negative samples        & model                & max        & mean ± std        & max        & mean ± std        \\ \hline
\multirow{6}{*}{uniform}    & \multirow{3}{*}{hard 0} & lstm                 & 99.89      & 92.96 ± 13.2      & 95.54      & 94.03 ± 1.48      \\
                            &                         & o2rnn                & 99.12      & 99.1 ± 0.01       & 87.54      & 87.5 ± 0.02       \\
                            &                         & transformer-cls      & 50         & 50.00 ± 0.00      & 50         & 49.99 ± 0.02      \\ \cline{2-7} 
                            & \multirow{3}{*}{hard 2} & lstm                 & 86.96      & 79.33 ± 10.15     & 74.74      & 74.41 ± 0.31      \\
                            &                         & o2rnn                & 85.5       & 82.6 ± 2.93       & 70.72      & 70.66 ± 0.03      \\
                            &                         & transformer-cls      & 50         & 50.00 ± 0.00      & 50         & 50.00 ± 0.00      \\ \hline
\multirow{6}{*}{orthogonal} & \multirow{3}{*}{hard 0} & lstm                 & 99.99      & 98.64 ± 1.76      & 96.88      & 96.13 ± 0.5       \\
                            &                         & o2rnn                & 99.12      & 99.10 ± 0.01      & 87.54      & 87.03 ± 1.41      \\
                            &                         & transformer-cls      & 53.05      & 50.58 ± 1.0       & 50.22      & 49.99 ± 0.13      \\ \cline{2-7} 
                            & \multirow{3}{*}{hard 2} & lstm                 & 86.55      & 67.85 ± 14.15     & 75.24      & 74.83 ± 0.27      \\
                            &                         & o2rnn                & 85.41      & 80.34 ± 3.36      & 70.72      & 70.25 ± 1.23      \\
                            &                         & transformer-cls      & 51.23      & 50.15 ± 0.48      & 50.08      & 49.72 ± 0.61      \\ \hline
\multirow{6}{*}{sparse}     & \multirow{3}{*}{hard 0} & lstm                 & 99.59      & 99.27 ± 0.16      & 96.55      & 95.85 ± 0.39      \\
                            &                         & o2rnn                & 99.12      & 99.10 ± 0.01      & 87.54      & 83.75 ± 11.25     \\
                            &                         & transformer-cls      & 50         & 50.00 ± 0.00      & 50         & 50.00 ± 0.00      \\ \cline{2-7} 
                            & \multirow{3}{*}{hard 2} & lstm                 & 86.26      & 76.96 ± 13.73     & 76.39      & 72.87 ± 7.63      \\
                            &                         & o2rnn                & 85.66      & 80.50 ± 3.48      & 70.72      & 58.26 ± 10.12     \\
                            &                         & transformer-cls      & 50         & 50.00 ± 0.00      & 50         & 50.00 ± 0.00      \\ \hline
\end{tabular}
\caption{Effect of weight initialization strategies on networks ability to respond to topologically close positive and negative strings}
\label{tab:wi}
\end{table*}


\begin{figure*}
\centering

\begin{subfigure}{0.4\textwidth}
  \centering
  \includegraphics[width=0.8\linewidth]{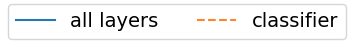}
  \label{fig:sub1}
\end{subfigure}%

\begin{subfigure}{0.3\textwidth}
  \centering
  \includegraphics[width=0.95\linewidth]{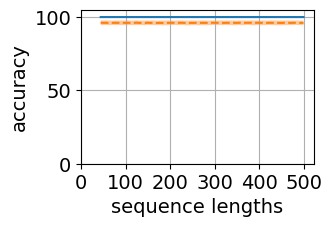}
  \caption{LSTM on \textit{hard 0}}
  \label{fig:lstm_hard0}
\end{subfigure}%
\begin{subfigure}{0.3\textwidth}
  \centering
  \includegraphics[width=.95\linewidth]{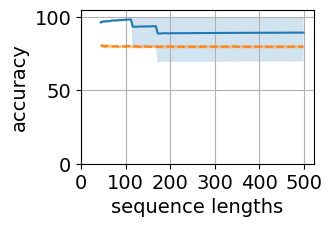}
  \caption{LSTM on \textit{hard 1}}
  \label{fig:lstm_hard1}
\end{subfigure}
\begin{subfigure}{0.3\textwidth}
  \centering
  \includegraphics[width=.95\linewidth]{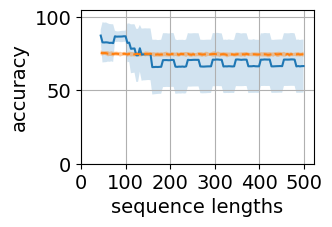}
  \caption{LSTM on \textit{hard 2} }
  \label{fig:lstm_hard2}
\end{subfigure}

\begin{subfigure}{0.3\textwidth}
  \centering
  \includegraphics[width=0.95\linewidth]{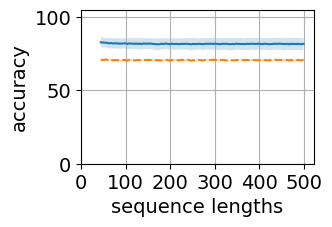}
  \caption{O2RNN on \textit{hard 0}}
  \label{fig:o2rnn_hard0}
\end{subfigure}%
\begin{subfigure}{0.3\textwidth}
  \centering
  \includegraphics[width=.95\linewidth]{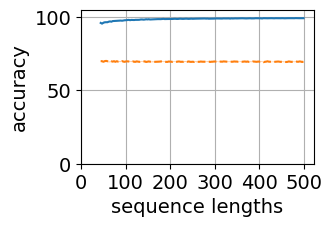}
  \caption{O2RNN on \textit{hard 1}}
  \label{ffig:o2rnn_hard1}
\end{subfigure}
\begin{subfigure}{0.3\textwidth}
  \centering
  \includegraphics[width=.95\linewidth]{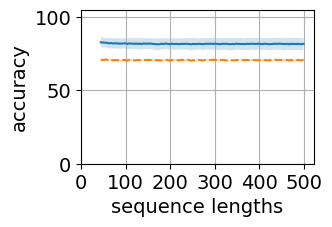}
  \caption{O2RNN on \textit{hard 2}}
  \label{fig:o2rnn_hard2}
\end{subfigure}

\caption{Generalization plots for LSTM and O2RNN}
\label{fig:generalization}
\end{figure*}

%

\begin{figure*}
\centering
\begin{subfigure}{0.4\textwidth}
  \centering
  \includegraphics[width=0.8\linewidth]{images/legend_hard.png}
  \label{fig:sub1}
\end{subfigure}%
1
\begin{subfigure}{0.35\textwidth}
  \centering
  \includegraphics[width=0.95\linewidth]{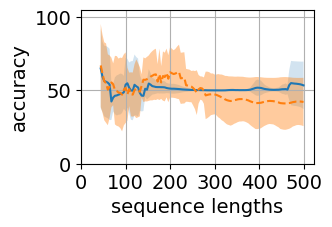}
  \caption{Transformer-cls on $a^nb^nc^n$}
  \label{fig:sub1}
\end{subfigure}%
\begin{subfigure}{0.35\textwidth}
  \centering
  \includegraphics[width=.95\linewidth]{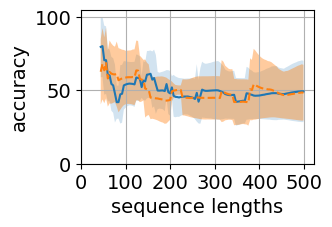}
  \caption{Transformer-cls on $a^nb^nc^nd^n$}
  \label{fig:sub2}
\end{subfigure}

\caption{Generalization plots for \textit{transformer-cls} network with \textit{hard 0} negative sampling strategy }
\label{fig:transformers}
\end{figure*}


\begin{table*}[h]
\centering
\footnotesize
\begin{tabular}{llccc}
\hline
grammar                   & feature                   & layers trained & max   & mean ± std    \\ \hline
\multirow{4}{*}{$a^nb^nc^n$}   & \multirow{2}{*}{cls}      & all layers            & 55.45 & 51.70 ± 2.37  \\
                          &                           & classifier-only     & 60.22 & 49.88 ± 7.40  \\
                          & \multirow{2}{*}{avg-pool} & all layers           & 58.58 & 51.53 ± 2.44  \\
                          &                           & classifier-only     & 59.15 & 51.60 ± 2.92  \\ \hline
\multirow{4}{*}{$a^nb^nc^nd^n$} & \multirow{2}{*}{cls}      & all layers            & 64.25 & 51.99 ± 10.59 \\
                          &                           & classifier-only     & 60.47 & 50.22 ± 9.38  \\
                          & \multirow{2}{*}{avg-pool} & all layers          & 55.79 & 49.35 ± 3.01  \\
                          &                           & classifier-only    & 52.36 & 49.41 ± 1.27  \\ \hline
\end{tabular}
\caption{One layer transformer encoder networks do not learn counter languages like $a^nb^nc^n$ and $a^nb^nc^nd^n$ with \textit{hard 0} negative sampling strategy}
\label{tab:transformers}
\end{table*}

\subsection{Stable and Unstable Fixed Points}

$\sigma(wx+b)$ and $\tanh(wx+b)$ are monotonic functions with bounded co-domain. For $w > 0$, both functions are non-decreasing. Let $f:\mathbb{R} \rightarrow \mathbb{R}$ be a monotonic, non-decreasing function with bounded co-domain, and $g(x) = x, \ x \in \mathbb{R} $,  Then,  
\begin{itemize}
	\item \textit{If one fixed point exists, then it is a stable fixed point} \\
 Let $x > z_f$ where $z_f$ is the only fixed point. Then, $f(x) < g(x)$, thus iteratively $x_{i+1} = f(x_i)$, with each $x_{i+1} \leq x_{i}$ and equality occuring at $x_i = z_f$. Similary for $x < z_f$, we can show that with each iterative application of $f(x)$, $x$ moves towards $z_f$. 
	\item \textit{If two fixed points exists, then one fixed point is stable and other is unstable.} \\
 If there are two fixed points then at one fixed point ($z_t$) $g(x)$ is tangent to $f(x)$. For $x \neq z_t$ $f(x) < g(x)$, thus making that fixed point unstable.
    
	\item \textit{If three fixed point exists, then two fixed points are stable and one is unstable.}\\
 This is already shown in Theorem 3.2 and 3.3.
\end{itemize}

\section*{Appendix B: Estimation Methodology based on Machine Precision }
Given the constraints of the RNN model and the precision limits of float32, we aim to calculate the maximum distinguishable count \(N\) for each symbol in the sequence.

\textbf{Assumptions}
\begin{itemize}
    \item The \(\tanh\) activation function is used in the RNN, bounding the hidden state outputs within \((-1, 1)\).
    \item The machine epsilon (\(\epsilon\)) for float32 is approximately \(1.19 \times 10^{-7}\), indicating the smallest representable change for values around 1.
    \item A conservative approach is adopted, considering a dynamic range of interest for \(\tanh\) outputs from -0.9 to 0.9 to avoid saturation effects.
\end{itemize}

\textbf{Calculation}
\textbf{Dynamic Range and Minimum Noticeable Change}
The effective dynamic range for \(\tanh\) outputs is set to avoid saturation, calculated as:
\[
\text{Dynamic Range} = 0.9 - (-0.9) = 1.8.
\]
Assuming a minimum noticeable change in the hidden state, given by \(10 \times \epsilon\), to ensure distinguishability within the SGD training process, we have:
\[
\Delta h_{\min} = 10 \times 1.19 \times 10^{-7}.
\]

\textbf{Number of Distinguishable Steps}
The total number of distinguishable steps within the dynamic range can be estimated as:
\[
\text{Steps} = \frac{\text{Dynamic Range}}{\Delta h_{\min}}.
\]
Given the usable capacity for encoding is potentially less than the total dynamic range due to the RNN's need to represent sequence information beyond mere counts, a conservative factor (\(f\)) is applied:
\[
N_{\max} = f \times \text{Steps}.
\]

\textbf{Conservative Factor and Final Estimation}
Applying a conservative factor (\(f\)) to account for the practical limitations in encoding and sequence discrimination, we estimate \(N_{\max}\) without dividing by 3, contrary to the previous incorrect interpretation. This factor reflects the assumption that not all distinguishable steps are equally usable for encoding sequences due to the complexity of sequential dependencies and the potential for error accumulation.

\[
N_{\max} = f \times \frac{1.8}{10 \times 1.19 \times 10^{-7}}.
\]

Thus we can shown that this estimation provides a mathematical framework for understanding the maximum count \(N\) that can be distinguished by a simple RNN model with fixed weights and a trainable classification layer, under idealized assumptions about floating-point precision and the behavior of the \(\tanh\) activation function. The actual capacity for sequence discrimination may vary based on the specifics of the network architecture, weight initialization, and training methodology.

\section*{Appendix C: Complete Proofs Precision Theorem}

We provide detailed proof for theorem 6.1 presented in the main paper
\begin{proof}
The proof proceeds in three steps: (1) analyzing the hidden state dynamics in the presence of random fixed weights, (2) demonstrating that distinct classes (e.g., \( a^n b^n c^n \)) can still be linearly separable based on the hidden states, and (3) showing that the classification layer can be trained to distinguish these hidden state patterns.

\textbf{1. Hidden State Dynamics with Fixed Random Weights:}

Consider an RNN with hidden state \( \mathbf{h}_t \in \mathbb{R}^d \) updated as:
\[
\mathbf{h}_{t+1} = \tanh(\mathbf{W} \mathbf{h}_t + \mathbf{U} \mathbf{x}_t + \mathbf{b}),
\]
where \( \mathbf{W} \in \mathbb{R}^{d \times d} \) and \( \mathbf{U} \in \mathbb{R}^{d \times m} \) are randomly initialized and fixed. The hidden state dynamics in this case are governed by the random projections imposed by \( \mathbf{W} \) and \( \mathbf{U} \).

Although the weights are random, the hidden state \( \mathbf{h}_t \) still carries information about the input sequence. Specifically, different sequences (e.g., \( a^n \), \( b^n \), and \( c^n \)) induce distinct trajectories in the hidden state space. These trajectories are not arbitrary but depend on the input symbols, even under random weights.

\textbf{2. Distinguishability of Hidden States for Different Sequence Classes:}

Despite the randomness of the weights, the hidden state distributions for different sequences remain distinguishable. For example:
- The hidden states after processing \( a^n \) tend to cluster in a specific region of the state space, forming a characteristic distribution.
- Similarly, the hidden states after processing \( b^n \) and \( c^n \) will occupy different regions.

These clusters may not correspond to single fixed points as in the trained RNN case, but they still form distinct, linearly separable patterns in the high-dimensional space.

\textbf{3. Training the Classification Layer:}

The classification layer is a fully connected layer that maps the final hidden state \( \mathbf{h}_N \) to the output class (e.g., "class 1" for \( a^n b^n c^n \)). The classification layer is trained using a supervised learning approach, typically minimizing a cross-entropy loss.

Because the hidden states exhibit distinct distributions for different sequences, the classification layer can learn to separate these distributions. In high-dimensional spaces, even random projections (as induced by the random recurrent weights) create enough separation for the classification layer to distinguish between different classes.

The main key insight observed based on above analysis is that even with random fixed weights, the hidden state dynamics create distinguishable patterns for different input sequences. The classification layer, which is the only trained component, leverages these patterns to correctly classify sequences like \( a^n b^n c^n \). This demonstrates that the RNN’s expressivity remains sufficient for the classification task, despite the randomness in the recurrent layer.

\end{proof}

Now We provide detailed proof for theorem 6.2 presented in the main paper

\begin{proof}
The proof is divided into three parts: (1) establishing the existence of stable fixed points for each input symbol, (2) analyzing the convergence of state dynamics to these fixed points, and (3) demonstrating how the RNN encodes the sequence \( a^n b^n c^n \) using these fixed points.

\textbf{1. Existence of Stable Fixed Points for Each Input Symbol:}

Let the hidden state \( \mathbf{h}_t \in \mathbb{R}^d \) at time \( t \) be updated according to:
\[
\mathbf{h}_{t+1} = \tanh(\mathbf{W} \mathbf{h}_t + \mathbf{U} \mathbf{x}_t + \mathbf{b}),
\]
where \( \mathbf{x}_t \in \{a, b, c\} \) represents the input symbol. For a fixed input symbol \( \mathbf{x} \), we analyze the fixed points of the hidden state dynamics.

The fixed points satisfy:
\[
\mathbf{h}^* = \tanh(\mathbf{W} \mathbf{h}^* + \mathbf{U} \mathbf{x} + \mathbf{b}).
\]

Assume that the system has distinct stable fixed points \( \xi_a^-, \xi_b^-, \xi_c^- \) for inputs \( \mathbf{x} = a \), \( \mathbf{x} = b \), and \( \mathbf{x} = c \), respectively. These fixed points are stable under small perturbations, meaning that for each symbol, the hidden state dynamics tend to converge to the corresponding fixed point.

\textbf{2. Convergence of State Dynamics to Fixed Points:}

For a sufficiently long subsequence of identical symbols, such as \( a^n \), the hidden state will converge to \( \mathbf{h} \approx \xi_a^- \) as \( t \) increases. This convergence is governed by the stability of the fixed point \( \xi_a^- \). The same holds true for subsequences \( b^n \) and \( c^n \), where the hidden state will converge to \( \xi_b^- \) and \( \xi_c^- \), respectively.

Mathematically, this convergence is characterized by the eigenvalues of the Jacobian matrix \( \mathbf{J} \) at the fixed point \( \xi_a^- \):
\[
\mathbf{J} = \frac{\partial}{\partial \mathbf{h}} \left[ \tanh(\mathbf{W} \mathbf{h} + \mathbf{U} \mathbf{a} + \mathbf{b}) \right] \bigg|_{\mathbf{h} = \xi_a^-}.
\]
If the eigenvalues satisfy \( |\lambda_i| < 1 \) for all \( i \), the fixed point is stable, ensuring that the hidden state dynamics converge to \( \xi_a^- \) over time.

\textbf{3. Encoding the Sequence \( a^n b^n c^n \) via Fixed Points:}

Given a bounded sequence length \( N \), the RNN can encode the sequence \( a^n b^n c^n \) by leveraging the stable fixed points \( \xi_a^- \), \( \xi_b^- \), and \( \xi_c^- \) as follows:
\begin{enumerate}
    \item After processing the subsequence \( a^n \), the hidden state converges to \( \mathbf{h} \approx \xi_a^- \).
    \item Upon receiving the input symbol \( b \), the hidden state begins to transition from \( \xi_a^- \) to \( \xi_b^- \). As the network processes \( b^n \), the hidden state stabilizes at \( \xi_b^- \).
    \item Similarly, the hidden state transitions to \( \xi_c^- \) after processing \( c^n \), representing the final part of the sequence.
\end{enumerate}

\textbf{4. Expressivity of RNNs and DFA Equivalence:}

The expressivity of RNNs and even o2RNN is equivalent to that of deterministic finite automata (DFA) \cite{merrill-etal-2020-formal, mali2023computational}. In this context, the RNN’s behavior mirrors that of a DFA, with distinct stable fixed points representing states for each input symbol. The transitions between these states are governed by the input sequence and the corresponding hidden state dynamics, which collapse to stable fixed points. This allows the RNN to encode complex grammars like \( a^n b^n c^n \) purely through its internal state dynamics.

Thus it can be seen that RNN can encode the sequence \( a^n b^n c^n \) by relying on the convergence of state dynamics to stable fixed points. The bounded sequence length \( N \) ensures that the hidden states have sufficient time to converge to these fixed points, enabling the network to express such grammars within its capacity. The expressivity of the RNN, akin to a DFA, underlines that the encoding is achieved purely through state dynamics, which is especially true for 02RNN.

\end{proof}

\end{document}